\documentclass[letterpaper]{article} 
\usepackage{aaai25}  
\usepackage{times}  
\usepackage{helvet}  
\usepackage{courier}  
\usepackage[hyphens]{url}  
\usepackage{graphicx} 
\urlstyle{rm} 
\usepackage{natbib}  
\usepackage{caption} 
\frenchspacing  
\setlength{\pdfpagewidth}{8.5in}  
\setlength{\pdfpageheight}{11in}  
%
\usepackage{algorithm}
\usepackage{algorithmic}

%
\usepackage{newfloat}
\usepackage{listings}
\DeclareCaptionStyle{ruled}{labelfont=normalfont,labelsep=colon,strut=off} 
\lstset{%
	basicstyle={\footnotesize\ttfamily},
	numbers=left,numberstyle=\footnotesize,xleftmargin=2em,
	aboveskip=0pt,belowskip=0pt,%
	showstringspaces=false,tabsize=2,breaklines=true}
\floatstyle{ruled}
\newfloat{listing}{tb}{lst}{}
\floatname{listing}{Listing}
%
\pdfinfo{
/TemplateVersion (2025.1)
}

\usepackage{hyperref} 

\setcounter{secnumdepth}{0} 

%



\usepackage{booktabs}
\usepackage{amsmath}
\usepackage{amssymb}
\usepackage{mathtools}
\usepackage{amsthm}
\usepackage{enumitem}
\usepackage{enumitem}
\usepackage{subfigure}
\theoremstyle{plain}
\newtheorem{theorem}{Theorem}
\newtheorem{proposition}[theorem]{Proposition}
\newtheorem{lemma}[theorem]{Lemma}

\theoremstyle{definition}

\theoremstyle{remark}

\usepackage[textsize=tiny]{todonotes}
\usepackage[capitalize,noabbrev]{cleveref}

\title{Rapid Learning in Constrained Minimax Games with Negative Momentum}
\author{
    Zijian Fang\textsuperscript{\rm 1}\equalcontrib,
    Zongkai Liu\textsuperscript{\rm 1, \rm 3}\equalcontrib,
    Chao Yu\textsuperscript{\rm 1, \rm 2}\thanks{Corresponding author.},
    Chaohao Hu\textsuperscript{\rm 1}
}
\affiliations {
    \textsuperscript{\rm 1}School of Computer Science and Engineering, Sun Yat-sen University, Guangzhou, China\\
    \textsuperscript{\rm 2}Pengcheng Laboratory, Shenzhen, China\\
    \textsuperscript{\rm 3}Shanghai Innovation Institute, Shanghai, China\\
    \{fangzj, liuzk\}@mail2.sysu.edu.cn, {yuchao3}@mail.sysu.edu.cn,
    {huchh9}@mail3.sysu.edu.cn

}

\usepackage{bibentry}

\begin{document}

\maketitle

\begin{abstract}
In this paper, we delve into the utilization of the negative momentum technique in constrained minimax games. From an intuitive mechanical standpoint, we introduce a novel framework for momentum buffer updating, which extends the findings of negative momentum from the unconstrained setting to the constrained setting and 
provides a universal enhancement to the classic game-solver algorithms. Additionally, we provide theoretical guarantee of convergence for our momentum-augmented algorithms with entropy regularizer. We then extend these algorithms to their extensive-form counterparts. Experimental results on both Normal Form Games (NFGs) and Extensive Form Games (EFGs) demonstrate that our momentum techniques can significantly improve algorithm performance, surpassing both their original versions and the SOTA baselines by a large margin.
\begin{links}
    \link{Code}{https://github.com/kkkaiaiai/NM-Method}
\end{links}
\end{abstract}

\section{Introduction}

In recent years, a broad spectrum of applications in machine learning and robust optimization have been cast as a minimax optimization problem in the form of $\min_{\boldsymbol x \in \mathcal X} \max_{\boldsymbol y \in \mathcal Y} f(\boldsymbol x, \boldsymbol y)$. Examples formulated under this framework include generative adversarial networks (GANs)~\cite{DBLP:journals/cacm/GoodfellowPMXWO20}, adversarial training~\shortcite{DBLP:conf/iclr/SinhaND18}, fair statistical inference~\cite{DBLP:conf/icml/MadrasCPZ18}, market equilibrium~\cite{kroer2019computing}, primal-dual reinforcement learning~\cite{du2017stochastic} and numerous others. This optimization problem can be conceptualized as a zero-sum game involving two players: the first player minimizes $f(\boldsymbol x, \boldsymbol y)$ by tuning $\boldsymbol x$, while the other player maximizes $f(\boldsymbol x, \boldsymbol y)$ by tuning $\boldsymbol y$.

With the inextricably intertwined advancement of online learning and game theory, several algorithms have become foundational solvers for minimax games, including the Online Mirror Descent (OMD)~\cite{warmuth1997continuous}, Follow-The-Regularized-Leader (FTRL)~\cite{DBLP:conf/colt/AbernethyHR08}, and particularly Regret Matching (RM)~\cite{hart2000simple}, which is an algorithm more closely aligned with game theory and becomes the building block of solving imperfect-information games~\cite{moravvcik2017deepstack, brown2018superhuman}. 
However, these algorithms are known to exhibit rotation behaviour and fail to converge pointwise even in simple bilinear cases~\cite{DBLP:conf/nips/Vlatakis-Gkaragkounis19a}. A corpus of studies aim to address this divergence issue through plain modifications of standard algorithms, 
with a specific focus on achieving an enhanced convergence rate and/or securing last-iterate convergence guarantees~\cite{DBLP:conf/nips/GolowichPD20}.

Among these techniques, regularization and optimistic gradient are two widely used methods, with Magnet Mirror Descent (MMD)~~\cite{DBLP:conf/iclr/SokotaDKLLMBK23} and Optimistic Gradient Descent Ascent (OGDA)~~\cite{DBLP:conf/iclr/MertikopoulosLZ19} as their respective representative algorithms.
MMD provides a linear convergence rate to the regularized equilibrium by utilizing the influences of regularization on last-iterate convergence.
OGDA, on the other hand, introduces an optimistic gradient estimate to guide the convergence process more effectively by predicting future gradients.
In addition, Negative Momentum (NM)  is introduced as an enhancement technique in recent research \cite{DBLP:conf/aistats/GidelHPPHLM19}, achieving a linear convergence rate comparable to Extragradient \cite{korpelevich1976extragradient} and OGDA in unconstrained bilinear games. Subsequent works have further delved into the convergence properties of NM~\cite{zhang2021suboptimality, lorraine2022complex}. 
However, 
recent analysis of NM  predominantly focus on the impact of its integration with GDA and the learning dynamics over the unconstrained setting, leaving a gap in discussions concerning  its interaction with game-solver algorithms and its performance in constrained settings. 
Hence, this paper places particular emphasis on the following two questions:

\begin{itemize}
  \item Can NM be extended from the unconstrained setting to the constrained  setting?
  \item Can NM provide a significant  empirical improvement over existing methods like regularization and optimistic?
  
\end{itemize}

To provide affirmative answers to these questions, we make the following  contributions in this paper:

\begin{itemize}
  \item We introduce a negative momentum updating framework tailored for the constrained setting, coupled with an intuitive paradigm for updating the momentum buffer, which can be seamlessly integrated with classic algorithms. Furthermore, by using the dilated distance generated function~\cite{hoda2010smoothing} and regret decomposition framework~\cite{farina2019online}, we propose momentum-augmented versions of their extensive-form counterparts.
  \item   We theoretically prove  that our momentum-augmented variant with negative entropy regularizer achieves an exponential convergence rate to an approximate equilibrium with an infinitely large buffer or converges to the set of Nash equilibria with a sufficiently large buffer. 
  \item We conduct comprehensive experiments over randomly generated NFGs and four standard EFGs, including Kuhn Poker, Leduc Poker, Goofspiel and Liar’s dice. 
  The experimental results demonstrate that the momentum-augmented algorithms exhibit significant improvements over both their original versions and other existing strong last-iterate convergent baselines.
  It is noteworthy that our proposed algorithms $\text{MoRM}^{+}$($\text{MoCFR}^{+}$)  consistently obtain $10^{9}$ times lower exploitability than $\text{RM}^{+}$($\text{CFR}^{+}$), and outperform another SOTA variant $\text{PCFR}^{+}$. To our knowledge, this marks the first instance where an algorithm surpasses $\text{CFR}^{+}$ across various types of games.
\end{itemize}

\section{Related Work}

The related work is organized to encompass  existing general techniques for facilitating minimax training and addressing the convergence problem over both the unconstrained and constrained setting.

\subsubsection{Timescale separation.} Timescale separation involves solving the inside maximization problem initially to get an approximation of $\boldsymbol{y}^{*}$ and compute the gradient of $\boldsymbol{x}$ as if $\boldsymbol{y}^{*}$ is fixed, serving as a potential good descent direction. 
In training GANs, \citet{DBLP:conf/nips/HeuselRUNH17} utilize a larger learning rate for the discriminator to ensure convergence to a local Nash Equilibrium (NE).
\citet{fiez2021local} explore more
general non-convex non-concave zero-sum games and elucidate the local convergence to strict local
minimax equilibrium with finite timescale separation. 
The two-timescale update rule resembles a softened ``learning vs. best response''
scheme~\cite{DBLP:conf/nips/DaskalakisFG20}, which has also been investigated in the literature of the constrained setting like game solving, guaranteeing the convergence to the NE~\cite{ DBLP:conf/ijcai/LockhartLPLMTT19}. Nevertheless, the faster-updating player requires training a costly best response oracle at each iteration, and asymmetric updates typically lead to less desirable unilateral convergence.

\subsubsection{Predictive updates.} Predictive updates come from the intuitions that players could utilize heuristics to predict each other's next move~\cite{Foerster2017LearningWO, chavdarova2021tamingganslookaheadminmax}. Algorithmically, these methods can be considered variations of the optimistic/extra-gradient methods, where the gradient dynamics are modified by incorporating approximate second-order information~\cite{DBLP:conf/nips/SchaferA19}. 
Previous studies have investigated the last-iterate convergence in the unconstrained setting such as training GANs~\cite{DBLP:conf/aistats/LiangS19}.
In cases where a unique NE is assumed, 
\citet{DBLP:conf/innovations/DaskalakisP19} and \citet{DBLP:conf/iclr/WeiLZL21} have extended the scope of research for Optimistic Multiplicative Weight Update (OMWU) in NFGs. 
In the context of EFGs, \citet{DBLP:conf/nips/FarinaKS19} empirically demonstrate the last-iterate convergence of OMWU, while \citet{DBLP:conf/nips/LeeKL21} subsequently establish theoretical proofs with the uniqueness assumption of NE. 
Recent work by \cite{farina2023regretmatchinginstabilityfast, cai2023lastiterateconvergencepropertiesregretmatching} has extended the analysis to include tighter ergodic convergence rate and last-iterate convergence.
While the predictive updates algorithms are normally accompanied by theoretical properties, their practical implementation often necessitates the computation of multiple strategies at each iteration. Furthermore, these algorithms may not consistently yield a significant acceleration, especially in games with more intricate structures~\cite{DBLP:conf/aaai/FarinaKS21} or with a larger scale~\cite{DBLP:conf/nips/LeeKL21}.

\subsubsection{Regularization.}
Regularization has emerged as a pivotal tool for accelerating convergence. \citet{DBLP:conf/icml/PerolatMLOROBAB21} conduct a comprehensive analysis of the impact of entropy regularization on continuous-time dynamics, and propose a reward transformation method to achieve linear convergence in EFGs using counterfactual values. However, their theoretical findings cannot be inherently extended to desired discrete-time results.  
\citet{DBLP:journals/corr/abs-2208-09855} propose a variant of MWU by incorporating an additional regularization term serving as the mutation dynamic, while \citet{liu2023powerregularizationsolvingextensiveform} achieve improved convergence results by regularizing the payoff functions of the games.
Magnet Mirror Descent (MMD)~\cite{DBLP:conf/iclr/SokotaDKLLMBK23} investigate the influences of general-case regularization on last-iterate convergence and provide a linear convergence rate to the regularized equilibrium. In a parallel study, \citet{abe2024adaptivelyperturbedmirrordescent} yield comparable results but aim for an exact Nash equilibrium while imposing a more stringent constraint on the learning rate.

\subsubsection{Other techniques.}
Other methods modify algorithms with ad-hoc adjustments to game dynamics. Consensus optimization (CO)~\cite{DBLP:conf/nips/MeschederNG17} and gradient penalty~\cite{DBLP:conf/nips/GulrajaniAADC17} improve convergence by minimizing the players' gradient magnitude. 
\citet{DBLP:conf/icml/BalduzziRMFTG18} improve convergence by disentangling convergent potential components from rotation Hamiltonian components of vector field. 
However, these approaches require estimating coupled gradient and Hessian information, which is computationally expensive, prone to high variance, and less practical even in centralized settings.

\section{Preliminaries}
\subsection{Problem Formulation}
In this paper, we consider the problem of solving a constrained bilinear \textit{saddle-point problem} (SPP):
\begin{equation}
\min_{\boldsymbol x \in \mathcal X} \max_{\boldsymbol y \in \mathcal Y}\boldsymbol b^{\top}\boldsymbol x+\boldsymbol x^{\top}\boldsymbol G\boldsymbol y+\boldsymbol c^{\top}\boldsymbol y,
\label{bilinear form}
\end{equation}
where $G\in[-1,+1]^{M\times N}$ is a known loss function matrix, and $\mathcal X \subset \mathbb R^{M}$, $\mathcal Y \subset \mathbb R^{N}$ are the convex and compact decision sets (i.e. strategies) for min/max players. The linear terms are not essential in our analysis and thus we take $\boldsymbol{b}=\boldsymbol{c}=0$ throughout the paper\footnote{If they are not zero, one can translate $\boldsymbol x$ and $\boldsymbol y$ to cancel the linear terms by linear transform, see e.g. \cite{DBLP:conf/aistats/GidelHPPHLM19}.}. 
This problem formulation captures several game-theoretical applications such as finding the NE in norm-form/extensive-form zero-sum games when $\mathcal X$ and $\mathcal Y$ represent simplex $\Delta$ and treeplex $\Delta_{T}$, respectively. We denote the simplex/treeplex of dimension $d-1$ as $\Delta^{d}/\Delta_{T}^{d}$. By the celebrated minimax theorem~\cite{v1928theorie}, we have $\min_{\boldsymbol{x} \in \mathcal{X}} \max _{\boldsymbol{y} \in \mathcal{Y}} \boldsymbol{x}^{\top} \boldsymbol{G} \boldsymbol{y} =\max_{\boldsymbol{y} \in \mathcal{Y}} \min _{\boldsymbol{x} \in \mathcal{X}} \boldsymbol{x}^{\top} \boldsymbol{G} \boldsymbol{y}$. The set of Nash equilibria is defined as $\mathcal Z^{*}=\mathcal{X}^{*} \times \mathcal{Y}^{*}$ where $\mathcal{X}^{*}=\operatorname{argmin}_{\boldsymbol{x} \in \mathcal{X}} \max _{\boldsymbol{y} \in \mathcal{Y}} \boldsymbol{x}^{\top} \boldsymbol{G} \boldsymbol{y}$ and $ \mathcal{Y}^{*}=\operatorname{argmax}_{\boldsymbol{y} \in \mathcal{Y}} \min _{\boldsymbol{x} \in \mathcal{X}} \boldsymbol{x}^{\top} \boldsymbol{G} \boldsymbol{y}$, which is always convex for two-player zero-sum games. The duality gap (i.e. exploitability) of a pair of feasible strategies $\boldsymbol{z}=(\boldsymbol{x}, \boldsymbol{y}) \in \mathcal{Z}=\mathcal{X}\times\mathcal{Y}$ is defined as:
\begin{equation}
\begin{aligned}
\textit{DualityGap}(\boldsymbol{x}, \boldsymbol{y}) = \max_{\boldsymbol y^{\prime} \in \mathcal Y}\boldsymbol x^{\top}\boldsymbol G\boldsymbol y^{\prime} - \min_{\boldsymbol x^{\prime} \in \mathcal X}\boldsymbol x^{\prime^\top}\boldsymbol G\boldsymbol y.
\end{aligned}
\end{equation}
Note that $\textit{DualityGap}(\boldsymbol{x}, \boldsymbol{y}) \geq 0$ holds and $\textit{DualityGap}(\boldsymbol{x}, \boldsymbol{y}) \leq \epsilon$ implies that the strategy profile $\boldsymbol{z} \in \mathcal{Z}$ is an $\epsilon$-Nash equilibrium of the bilinear game.

For notation convenience, we let $P=M+N$ and denote the loss vector of the bilinear form in Equation (\ref{bilinear form}) as $F(\boldsymbol z_{t}) = (F(\boldsymbol x_{t}), F(\boldsymbol y_{t})) = (\boldsymbol G\boldsymbol y_{t}, -\boldsymbol G^{\top} \boldsymbol x_{t}) = (\boldsymbol f_{t},-\boldsymbol g_{t})$ for any $\boldsymbol z_{t}=(\boldsymbol x_{t},\boldsymbol y_{t}) \in \mathcal{Z} = \mathcal{X} \times \mathcal{Y} \subset \mathbb{R}^{P}$, where $\boldsymbol f_{t}$ and $\boldsymbol g_{t}$ represent the gradients of the current strategy profile. We assume $\|F(\boldsymbol z)\|_{\infty} \leq 1$ for all $\boldsymbol z \in \mathcal{Z}$, which can be always satisfied by normalizing the entries of $G$.

One way to solve bilinear SPPs is by viewing SPP as a repeated game between two players: at iteration $t$, players choose $\boldsymbol{z}_{t} \in \mathcal{Z}$ and then observe their loss $l_{t}^{\mathcal{Z}}\left(\boldsymbol{z}_{t}\right) = \left(\boldsymbol{x}_{t}^{\top} \boldsymbol{G} \boldsymbol{y}_{t}, -\boldsymbol{x}_{t}^{\top} \boldsymbol{G} \boldsymbol{y}_{t}\right) = \left\langle\boldsymbol{z}_{t}, F(\boldsymbol{z}_{t})\right\rangle$. The goal of each player is to minimize their regrets $R_{T,\boldsymbol x}$, $R_{T,\boldsymbol y}$ across $T$ iterations:
\begin{equation}
\begin{aligned}
R_{T, \boldsymbol{x}}&=\sum_{t=1}^{T}\left\langle\boldsymbol{f}_{t}, \boldsymbol{x}_{t}\right\rangle-\min _{\boldsymbol{x} \in \mathcal{X}} \sum_{t=1}^{T}\left\langle\boldsymbol{f}_{t}, \boldsymbol{x}\right\rangle, 
\\ 
\quad R_{T, \boldsymbol{y}}&=\max _{\boldsymbol{y} \in \mathcal{Y}} \sum_{t=1}^{T}\left\langle\boldsymbol{g}_{t}, \boldsymbol{y}\right\rangle-\sum_{t=1}^{T}\left\langle\boldsymbol{g}_{t}, \boldsymbol{y}_{t}\right\rangle ,
\end{aligned}
\label{regret definition}
\end{equation}
which measure the difference between the loss accumulated by the sequence of $(\boldsymbol{z}_{1},...\boldsymbol{z}_{T})$ and the loss that would have been accumulated by employing the best time-independent strategies $\boldsymbol z$ in hindsight. An algorithm is called  \textit{regret minimizer} if the regret grows sublinearly in $T$. We denote $R_{T, \hat{\boldsymbol{x}}}$ as the regret of arbitrary $\hat{\boldsymbol{x}} \in \mathcal{X}$.

It is well known that if $\boldsymbol{z}_{t}$ follows the trajectory of a \textit{regret minimizer} learning algorithm, $\frac{1}{t}\sum_{\tau \leq t}\boldsymbol x_{\tau}^{\top}\boldsymbol G\boldsymbol y_{\tau}$ converges to the optimal value of the bilinear SPP (\ref{bilinear form}) as $t \rightarrow \infty$, and the average strategies $\frac{1}{t}\sum_{\tau \leq t}\boldsymbol z_{\tau}$ converges to the optimal solution to the SPP (i.e. NE) if the solution is unique. 

\subsection{Online Learning and Regret Matching}
\label{No-regret Algorithms}

\subsubsection{Online Linear Optimization Oracles.} The online optimization oracles all follow a reminiscent procedure that within the operation loop of observing the loss vector $F(\boldsymbol z_{t})$ and updating the next strategies $\boldsymbol{z}_{t+1}$. For solving bilinear SPPs over constrained sets, Online Mirror Descent (OMD)~\cite{warmuth1997continuous}
and Follow-The-Regularized-Leader (FTRL)~\cite{DBLP:conf/colt/AbernethyHR08} stand out as two most classical online linear optimization algorithms.
With arbitrary $\boldsymbol{z}_{0} \in \mathcal{Z}$, the OMD algorithm proceeds iterations following the rule:
\begin{equation}
\boldsymbol{z}_{t+1} =\underset{\boldsymbol{z} \in \mathcal{Z}}{\operatorname{argmin}}\left\{\eta\left\langle\boldsymbol{z}, F(\boldsymbol{z}_{t})\right\rangle+D_{\psi}\left(\boldsymbol{z}, \boldsymbol{z}_{t}\right)\right\}, 
\label{OMD}
\end{equation}
where $\psi$ is a convex function called \textit{regularizer}, $\psi(\boldsymbol{z}) = \psi(\boldsymbol{x}) + \psi(\boldsymbol{y})$ and $D_{\psi}(p,q) = \psi(p) - \psi(q) - \langle{\nabla\psi(q), p - q}\rangle$ is the \textit{Bregman divergence}. The FTRL produces iterations following the rule:
\begin{equation}
\boldsymbol{z}_{t+1} =\underset{\boldsymbol{z} \in \mathcal{Z}}{\operatorname{argmin}}\left\{\sum^{t}_{k=1}\eta\left\langle\boldsymbol{z}, F(\boldsymbol{z}_{k})\right\rangle+\psi (\boldsymbol z)\right\}.
\label{FTRL}
\end{equation}
 FTRL can be equivalent to OMD with linearized loss~\cite{orabona2023modernintroductiononlinelearning}. OMD/FTRL instantiate Gradient Descent Ascent (GDA) and Multiplicative Weights
Update (MWU) when regularizer $\psi(\boldsymbol{z})$ is specified as the negative entropy $\psi(\boldsymbol{z}) = \boldsymbol z\log \boldsymbol z$ or the $L_{2}$-norm $\psi(\boldsymbol{z}) = \frac{1}{2}\|\boldsymbol{z}\|^{2}$, respectively.
\subsubsection{Regret Matching.} Regret Matching (RM)~\cite{hart2000simple} stands as one of the preeminent \textit{regret minimizer}  learning algorithms, extensively utilized in the domain of game-solving applications. An instantaneous regret vector is defined as $\boldsymbol r(\boldsymbol x_{t}) = \left\langle\boldsymbol x_{t}, F(\boldsymbol x_{t})\right\rangle \cdot \boldsymbol1_{L}-F(\boldsymbol x_{t})$,\footnote{$L$ can be either $M$ or $N$, that is we overload the notation $r$ so its domain depends on the input.} which measures the change in regret incurred at iteration $t$ relative to each dimension of the decision sets. RM keeps an accumulative regret $\boldsymbol R^{x}_{t}$ and updates strategy $\boldsymbol x_{t+1}$ by normalizing the thresholded accumulative regret:
\begin{equation}
\boldsymbol R^{x}_{t+1} = \sum_{\tau=0}^{t} \boldsymbol r(\boldsymbol x_{\tau}),\quad 
\boldsymbol x_{t+1} = \left[\boldsymbol R^{x}_{t+1}\right]^{+}\big/\|\left[\boldsymbol R^{x}_{t+1}\right]^{+}\|_{1},
\end{equation}
where $\left[\cdot\right]^{+}$ denotes thresholding at zero. $\text{Regret Matching}^{+}$ ($\text{RM}^{+}$)~\cite{tammelin2014solvinglargeimperfectinformation} is a variant of RM that further thresholds the accumulative regret at zero at every iteration: $\boldsymbol R^{x}_{t+1} = \left[\boldsymbol R^{x}_{t} + \boldsymbol r(\boldsymbol x_{t})\right]^{+}$. \citet{burch2019revisiting} show that combining $\text{RM}^{+}$ with the alternation trick, wherein the strategies of two players are updated asynchronously, yields faster empirical performance and is proven to exhibit strict improvement for game solving~\cite{grand2023solving}. 
In contrast to FTRL/OMD, the update rules of RM stand out for being parameter-free and exclusively involving closed-form operations, specifically, thresholding and normalizing.

\section{Methods}
\subsection{The Negative Momentum Mechanism}
We recall that in the unconstrained case, the iterations of the Gradient Descent Ascent with augmented (Polyak) momentum~\cite{polyak1964some} term (GDAm) proceed as follows:
\begin{equation}
\begin{aligned}
\boldsymbol z_{t+1} = \boldsymbol z_{t} - \eta F(\boldsymbol z_{t}) + \beta(\boldsymbol z_{t} - \boldsymbol z_{t-1}),
\label{GDAm}
\end{aligned}
\end{equation}
where $\eta$ is a positive step size. Alternatively, with $\boldsymbol \mu_{0}=0$, the procedure can be written in an equivalent form with momentum buffer $\boldsymbol \mu_{t}=(\boldsymbol z_{t}- \boldsymbol z_{t-1})/\eta$:
\begin{equation}
\begin{aligned}
\boldsymbol \mu_{t}=\beta \boldsymbol \mu_{t-1} -  F(\boldsymbol z_{t}), \quad 
\boldsymbol z_{t}=\boldsymbol z_{t-1} + \eta \boldsymbol \mu_{t}. 
\end{aligned}
\label{momentumbuffer}
\end{equation}

 The convergence process of GDAm can be illustrated by the Newton's $2^{nd}$ law $m\ddot{X} = \boldsymbol F$ of a particle of mass $m$. Without loss of generality, we henceforth assume the mass of our object to be unity. By viewing the $\boldsymbol F_\text{curl} = -F(\boldsymbol{z})$ as the curl force of the 2-dimensional $X-Y$ surface~\cite{berry2016curl}, the discretization of the continuous-time dynamic $ \ddot{Z} = -F(Z)$ with the discretization step size $\delta=\sqrt{\eta}$ can be written as follows:

\begin{equation}
\begin{aligned}
\boldsymbol z_{t+1} = 2\boldsymbol z_{t}  - \boldsymbol z_{t-1} - \eta F(\boldsymbol z_{t}) ,
\label{GDAm，1}
\end{aligned}
\end{equation}
which is corresponding to setting $\beta = 1$ in Equation (\ref{GDAm}).
\begin{figure}[ht]
  \centering
  \subfigure[]{
    \includegraphics[width=0.21\textwidth]{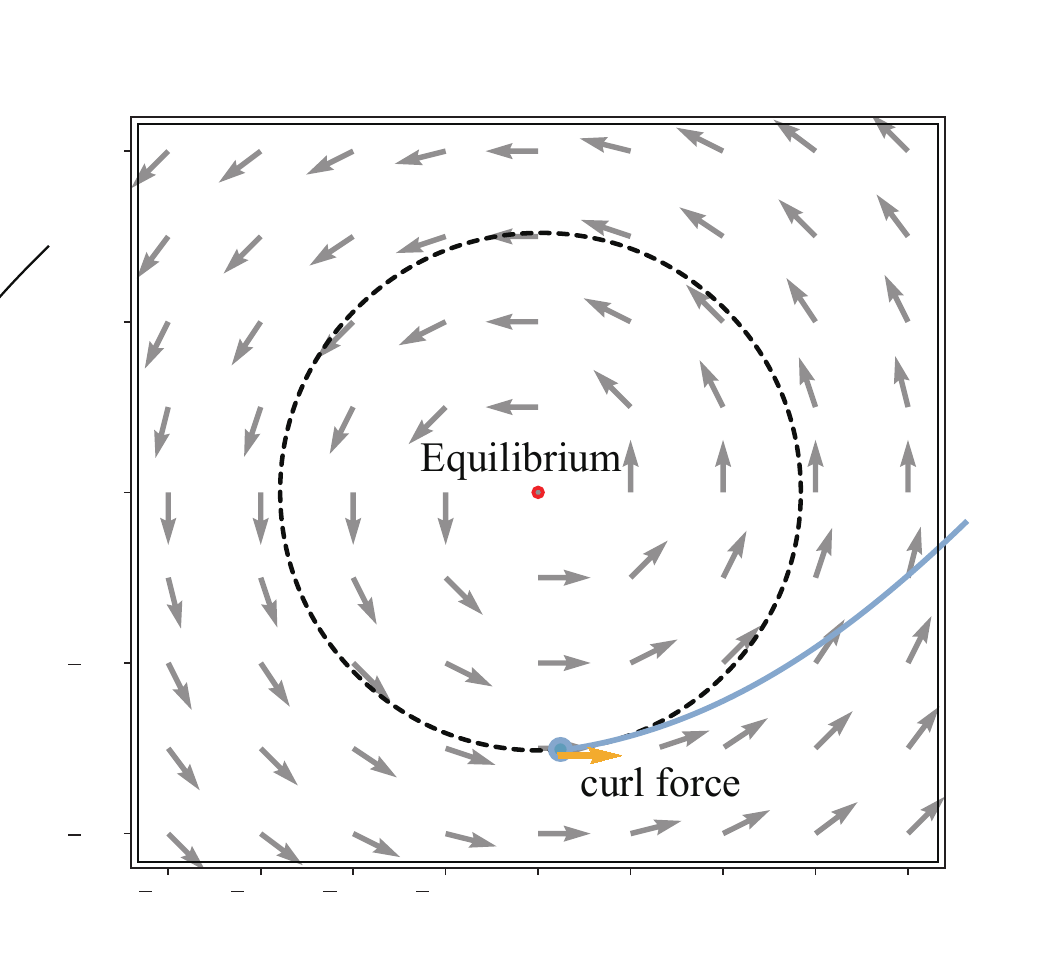}
    \label{curl}
    
  }
  \hfill
  \subfigure[]{
    \includegraphics[width=0.21\textwidth]{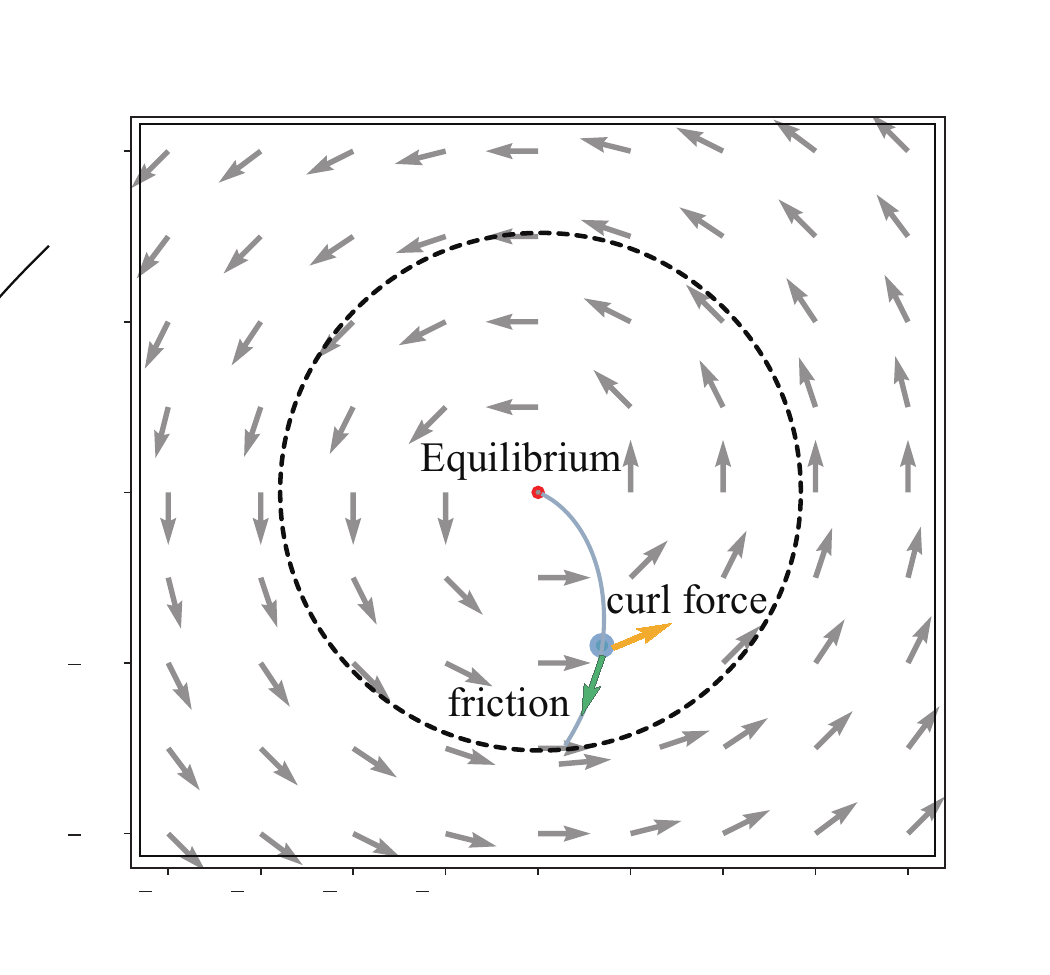}
    \label{friction}
  }
  \caption{The mechanic dynamic of a particle with different force. In (a), the particle situated within a curl force diverges from the equilibrium. In (b), the augmented friction results in a reduction of the particle's velocity, thereby dampening oscillations and facilitating eventual convergence.}
  \label{mechanic dyanmic}
\end{figure}

Nevertheless, as shown in Figure \ref{curl}, the mode of GDAm with non-negative parameters is divergent in the min-max objective since the curl force increases the particle's speed over time and thus prevents convergence. 
Therefore, to decrease the velocity of particle for convergence, a straightforward approach is augmenting the dynamic with an extra linear friction $\boldsymbol{F}_\text{fric} = -\mu\dot{Z}$ as follows:
\begin{align}\label{eq: continuous negative}
    \ddot{Z} = -F(Z) - \mu\dot{Z},
\end{align}
where $\mu$ can be considered as the coefficient of friction.  
Figure \ref{friction} illustrates that the friction in Equation (\ref{eq: continuous negative}) effectively dampens oscillation and leads to particle's convergence. Furthermore, through a combination of implicit and explicit update steps as \citet{DBLP:conf/nips/ShiDSJ19}, Proposition \ref{prop: unconstrained NM} illustrates that the discretization of Equation (\ref{eq: continuous negative}) corresponds to the GDAm with negative momentum.  

\begin{proposition}\label{prop: unconstrained NM}
    With the discretization step-size $\delta=\sqrt{\eta}$ and a sufficiently large $\mu > \frac{1}{\delta}$, Equation (\ref{eq: continuous negative}) can be discretized in the form of GDAm with negative momentum:
    \begin{align}
        \boldsymbol z_{t+1} = \boldsymbol z_{t} - \eta F(\boldsymbol z_{t}) +  \beta(\boldsymbol z_{t} - \boldsymbol z_{t-1}),
    \end{align}
    where $\beta=1 - \mu \delta < 0$.
\end{proposition}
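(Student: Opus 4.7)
The plan is to realize the continuous-time ODE $\ddot{Z} = -F(Z) - \mu \dot{Z}$ as a finite-difference scheme on a grid with spacing $\delta=\sqrt{\eta}$, by selecting the discretization of each term so that the resulting three-point recurrence rearranges exactly into the GDAm update with coefficient $\beta = 1 - \mu\delta$. The key observation is that which derivative is handled implicitly versus explicitly directly controls the sign and magnitude of the effective momentum coefficient, so the ``combination of implicit and explicit'' prescription cited from Shi et al.\ is really a specific choice of one-sided versus centered differences.

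Concretely, I would use the centered second difference for the acceleration, $\ddot{Z}(t)\approx (z_{t+1}-2z_t+z_{t-1})/\delta^2$ (this is the implicit step, since it involves the unknown $z_{t+1}$), the backward first difference for the velocity that multiplies the friction, $\dot{Z}(t)\approx (z_t - z_{t-1})/\delta$ (explicit in $z_{t+1}$), and the pointwise explicit evaluation $F(z_t)$ for the curl force. Substituting these into \eqref{eq: continuous negative} yields
\begin{equation*}
\frac{z_{t+1}-2z_t+z_{t-1}}{\delta^2} = -F(z_t) - \mu\,\frac{z_t - z_{t-1}}{\delta}.
\end{equation*}
Multiplying through by $\delta^2$, using $\eta=\delta^2$, and collecting the $z_t$ and $z_{t-1}$ terms gives
\begin{equation*}
z_{t+1} = z_t - \eta F(z_t) + (1-\mu\delta)(z_t - z_{t-1}),
\end{equation*}
which is exactly the GDAm update in \eqref{GDAm} with $\beta = 1 - \mu\delta$. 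The hypothesis $\mu > 1/\delta$ then immediately gives $\mu\delta > 1$, hence $\beta < 0$, establishing the negative-momentum regime.

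The routine part is the algebraic rearrangement; the only real subtlety — and the one thing I would flag as the main obstacle — is justifying the discretization choice itself. A naive fully explicit scheme (forward difference for $\dot{Z}$) would instead produce a positive coefficient $1+\mu\delta > 1$, i.e.\ a \emph{positive} momentum that accelerates divergence, while a fully implicit scheme would bury $z_{t+1}$ inside the friction term and fail to give a closed-form update. Thus the mixed scheme (implicit centered $\ddot Z$ with backward-difference $\dot Z$ and explicit $F$) is essentially the unique choice that (i) preserves second-order consistency with the ODE, (ii) keeps $z_{t+1}$ expressible in closed form, and (iii) reproduces the Polyak-style momentum form. I would close the argument by remarking that once this choice is fixed, the condition $\mu>1/\delta$ is the exact threshold between positive and negative momentum, matching the intuition from Figure~\ref{mechanic dyanmic} that sufficient friction is needed to reverse the destabilizing effect of the curl force.
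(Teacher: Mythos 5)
Your proof is correct and is essentially the paper's argument: the paper phrases the discretization as a mixed implicit/explicit Euler step on the equivalent first-order system, namely $\boldsymbol z_{t+1}-\boldsymbol z_t=\delta g(\boldsymbol z_{t+1})$ together with $g(\boldsymbol z_{t+1})-g(\boldsymbol z_t)=-\delta F(\boldsymbol z_t)-\mu\delta g(\boldsymbol z_t)$, which after eliminating $g$ is exactly your centered-second-difference plus backward-first-difference stencil and yields the same recurrence with $\beta=1-\mu\delta<0$ when $\mu>1/\delta$. One minor quibble with a non-load-bearing side remark: a forward difference for $\dot Z$ puts $\boldsymbol z_{t+1}$ inside the friction term (so it is the implicit, not the explicit, choice) and gives momentum coefficient $1/(1+\mu\delta)$ rather than $1+\mu\delta$, but this does not affect your main derivation.
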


\subsection{Generalization to The Constrained Setting}
Motivated by the effect of negative momentum terms, a key objective of our work is to extend these findings to the constrained setting. Inspired by Eq. \ref{momentumbuffer}, we introduce a momentum buffer defined by $\boldsymbol \mu_{t}=\beta \boldsymbol \mu_{t-1} - F(\boldsymbol{z}_{t})$ with $\boldsymbol{\mu}_{0}=0$. This allows us to replace $F(\boldsymbol{z}_{t})$ in Eq. \ref{OMD} with the negative momentum term $\boldsymbol{\mu}_{t}$ and propose the Mirror Descent with Momentum (MoMD) updating rules:
\begin{equation}
\boldsymbol{z}_{t+1} =\underset{\boldsymbol{z} \in \mathcal{Z}}{\operatorname{argmin}}\left\{\eta\left\langle\boldsymbol{z}, \boldsymbol -\boldsymbol \mu_{t}\right\rangle+D_{\psi}\left(\boldsymbol{z}, \boldsymbol{z}_{t}\right)\right\}. \\
\label{MMDm}
\end{equation}
Following the equivalence between OMD and FTRL, the updating rules of the FTRL with Momentum (MoFTRL) can be written as:  
\begin{equation}
\begin{aligned}
\boldsymbol L_{t}= \boldsymbol L_{t-1}  - \boldsymbol \mu_{t} &= \boldsymbol L_{t-1}  + F(\boldsymbol{z}_{t}) -\beta (\boldsymbol L_{t-2} - \boldsymbol L_{t-1}),  \\
\boldsymbol{z}_{t+1} &=\underset{\boldsymbol{z} \in \mathcal{Z}}{\operatorname{argmin}}\left\{\eta\left\langle\boldsymbol{z}, \boldsymbol L_{t} \right\rangle+\psi (\boldsymbol z)\right\}.
\end{aligned}
\label{FTRLm}
\end{equation}

\begin{figure*}[ht]
\begin{center}
\centerline{\includegraphics[width=2\columnwidth]{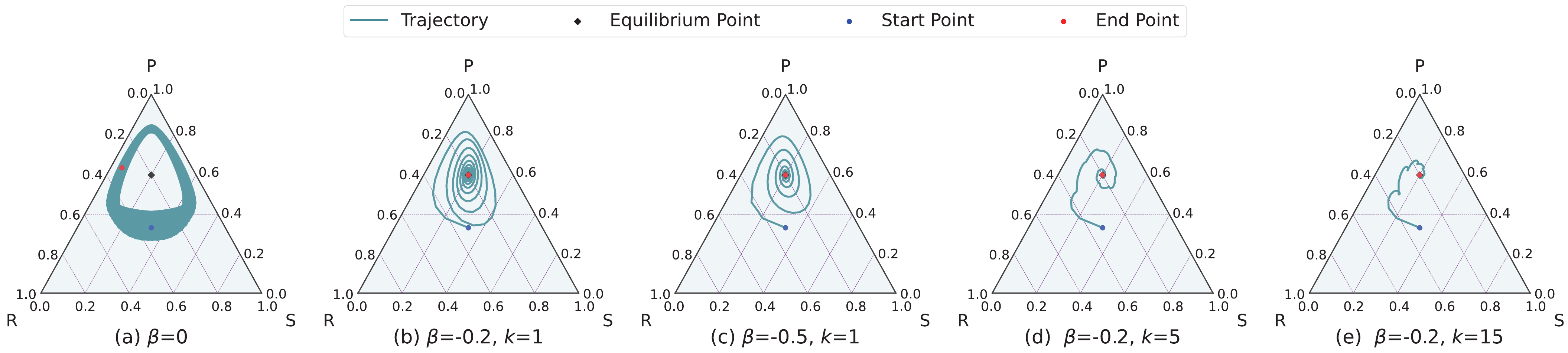}}
\caption{The trajectory plots depict the MoMWU algorithm under varying negative momentum coefficients $\beta$ and intervals $k$ in Bias RPS. The initial strategy is set to $(\boldsymbol{x}, \boldsymbol{y}) = \left(\frac{1}{3}, \frac{1}{3}, \frac{1}{3}\right)$. The equilibrium strategy is denoted by a black point, whereas the blue and red points symbolize the starting and ending points of the trajectory. The learning rate $\eta$ is fixed at 2 in this context.}
\label{trajectory}
\end{center}
\end{figure*}

It is noteworthy that the negative momentum term is kept in the dual space rather than the prime space. Taking an alternative perspective, negative momentum involves leveraging the buffer to store historical gradients and alternates between adding and subtracting the gradient at each iteration. 

To intensify the focus on (i.e., allocate greater weights to) gradients from recent iterations, inspired by prior research on momentum buffer variants~\cite{lorraine2022complex}
, we additionally propose to cache a profile $\mathcal{L}$ saving buffer snapshots within a fixed time duration $k$ and aggregate them for the final updates. 
That is, employing a degree of notation flexibility, we transform the formula for updating the momentum from $\boldsymbol{u}_t=\beta \boldsymbol{u}_{t-1}-F(\boldsymbol{z}_t)$ to $\boldsymbol{u}_t=\beta\sum_{\boldsymbol{u}\in\mathcal{L}}\boldsymbol{u} - F(\boldsymbol{z}_t)$ in Equation (\ref{MMDm}) or (\ref{FTRLm}), 
and the profile is reset when the length $L$ of $\mathcal{L}$ reaches specified $k$. This updating paradigm, denoted as  Restarting Aggregated Momentum (RAM), is illustrated in Algorithm \ref{Restarting Aggregated Momentum}.

\begin{algorithm}[tb]
   \caption{Restarting Aggregated Momentum (RAM)}
   \label{Restarting Aggregated Momentum}
\begin{algorithmic}[1]
   \STATE {\bfseries Input:} $\mathcal{L}$, $\mu$, $\beta$, Restarting interval $k$
   \STATE $\mathcal L \leftarrow \emptyset, \boldsymbol \mu_{0} = 0,\boldsymbol z_{0} \in \mathcal{Z}$ 
   \FOR{$t=0$ {\bfseries to} $T$}
   \STATE $\boldsymbol \mu_{t}=\beta \boldsymbol \sum_{\boldsymbol\mu \in \mathcal L}\boldsymbol{\mu} - F(\boldsymbol{z}_{t})$
   \STATE $\boldsymbol z_{t+1}=\textit{Oracle}(\boldsymbol z_{t}, \boldsymbol\mu_{t})$ \quad \text{following (\ref{MMDm}) or (\ref{FTRLm})}
   \IF{$L = k$}
    \STATE Reset the snapshot profile $\mathcal L \rightarrow \emptyset$ 
   \ENDIF
   \STATE Save current momentum $\left\{\boldsymbol\mu_{t}\right\} \cup \mathcal L \rightarrow \mathcal L$
   \ENDFOR
\STATE {\bfseries Output:} $\boldsymbol z_{t+1}$
\end{algorithmic}
\end{algorithm}

Taking MoFTRL as an example, if we substitute
$\boldsymbol \mu_{t}=\boldsymbol L_{t-1}- \boldsymbol L_{t}$ into the aggregation of snapshots profile and cancel out the interleave terms, the updating rules of the cumulative past loss in Equation (\ref{FTRLm}) become:
\begin{equation}
\begin{aligned}
\boldsymbol L_{t} = &\boldsymbol L_{t-1}  + F(\boldsymbol{z}_{t}) - \beta (\boldsymbol L_{att}- \boldsymbol L_{t-1}),  \\
&\boldsymbol L_{att} \leftarrow \boldsymbol L_{t-1} \quad\text{if $t$} \ \% \ k = 0.
\end{aligned}
\end{equation}
Intuitively, the momentum updating paradigm within Algorithm \ref{Restarting Aggregated Momentum} resembles mounting a spring between a periodically updating attachment point $ L_{att}$ and the current point $ L_{t}$, transforming the dynamic of friction into an effect reminiscent of restoring force of the spring. Algorithm \ref{Restarting Aggregated Momentum} reinstates the conventional momentum update rules when $k=1$. 

In order to illustrate the effects of RAM on the learning dynamics of algorithms, 
 Figure \ref{trajectory} demonstrates the trajectory of MoMD with negative entropy variant
(MoMWU) in the context of Bias RPS, where the game matrix is defined as $G = \left[\left[0,-1,3\right],\left[1,0,-1\right],\left[-3,1,0\right]\right]$.
 When contrasted with the original MWU (equivalent to setting $\beta=0$ in MoMWU) depicted in Figure \ref{trajectory}(a), the incorporation of negative momentum contributes to the damping of oscillations. Furthermore, higher values of $\beta$ exhibit improved effectiveness in mitigating oscillations within a reasonable range as shown in Figures \ref{trajectory}(b) and \ref{trajectory}(c). Extending the buffer length introduces a distinct periodic updating behavior in the trajectory. The additional attachment point adds a supplementary ``restoring force'', amplifying  resistance in each iteration compared to the standard momentum updating approach with $k=1$. This heightened resistance significantly dampens oscillations, as observed in Figures \ref{trajectory}(d) and \ref{trajectory}(e). The ablation experiments in Appendix \ref{sec: Ablation experiment} further discuss the effect of momentum related to $k$ on the algorithms.

The following theoretical analysis supports the experimental results above.
Assume that $\boldsymbol{z}_*$  is the Nash equilibrium (NE) of the following modified game:
\begin{align}\label{eq: modified game}
    \min_{\boldsymbol x \in \mathcal X} \max_{\boldsymbol y \in \mathcal Y}\boldsymbol x^{\top}\boldsymbol G\boldsymbol y - \frac{\beta}{\eta}D_{\psi}(\boldsymbol x, \boldsymbol x_{att}) + \frac{\beta}{\eta}D_{\psi}(\boldsymbol y, \boldsymbol y_{att}),
\end{align}
where  $ (\boldsymbol x_{att},\boldsymbol y_{att})=\underset{\boldsymbol{z} \in \mathcal{Z}}{\operatorname{argmin}}\left\{\eta\left\langle\boldsymbol{z}, \boldsymbol L_{att} \right\rangle+\psi (\boldsymbol z)\right\}$.
Theorem~\ref{thrm: convergence of MoMWU} establishes that Algorithm~\ref{Restarting Aggregated Momentum}, with a constant learning rate and a profile $\mathcal{L}$ that stores all past snapshots $\boldsymbol{\mu}_t$, converges at an exponentially fast rate to the modified equilibrium $\boldsymbol{z}_*$.
Theorem~\ref{thrm: DualityGap of MoMWU} indicates that the modified equilibrium serves as an $\mathcal{O}(\frac{-\beta}{\eta})$-NE of the original game. This implies that a lower $-\beta$ reduces the duality gap of the modified equilibrium, while a higher $-\beta$ accelerates the convergence rate, introducing a trade-off between convergence speed and the bias of NE attributed to $\beta$. 
Moreover, with an updating attachment point $\boldsymbol{L}_{att}$, Theorem~\ref{thrm: convergence to NE} proves that Algorithm~\ref{Restarting Aggregated Momentum} converges to NE with a sufficiently large $k$. Complete proofs of our theoretical results can be found in Appendix \ref{appendix proof}. 

\begin{theorem}\label{thrm: convergence of MoMWU}
    Let $k=\infty$, i.e., $\boldsymbol{L}_{att}=0$. 
    Then, $\boldsymbol{z}_t$ derived by Algorithm \ref{Restarting Aggregated Momentum} satisfies:
    \begin{align*}
        D_\psi(\boldsymbol z_{*},\boldsymbol z_{t}) \le 
        D_\psi(\boldsymbol z_{*},\boldsymbol z_{0}) \cdot (1+\frac{\beta}{2})^{t},
    \end{align*}
    if $\psi(p)=\langle{p,\ln p}\rangle$, $-\frac{2}{3}<\beta<0$ and $0<\eta\le\frac{\sqrt{-(1+\frac{3}{2}\beta)\beta}}{2}$.
\end{theorem}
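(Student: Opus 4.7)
The plan is to derive a per-iteration contraction of the form $D_\psi(\boldsymbol z_*, \boldsymbol z_{t+1}) \le (1 + \beta/2)\, D_\psi(\boldsymbol z_*, \boldsymbol z_t)$ and then iterate it $t$ times. Throughout the argument I would exploit that, with negative entropy $\psi(p) = \langle p, \ln p\rangle$, both the MoMD iterate and the modified equilibrium $\boldsymbol z_*$ of the game in (\ref{eq: modified game}) (with $\boldsymbol L_{att}=0$, so $\boldsymbol z_{att}$ is uniform) lie in the relative interior of the simplex; hence all first-order optimality conditions hold as equalities modulo a constant along the simplex, and those constants cancel whenever paired against differences of simplex vectors.

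The first step is to identify a compact form for $\boldsymbol\mu_t$. Telescoping the first-order relation $\nabla\psi(\boldsymbol z_{t+1}) \equiv \nabla\psi(\boldsymbol z_t) + \eta\boldsymbol\mu_t$ yields $\beta\sum_{\boldsymbol\mu\in\mathcal L}\boldsymbol\mu \equiv \tfrac{\beta}{\eta}(\nabla\psi(\boldsymbol z_t) - \nabla\psi(\boldsymbol z_0))$ when $k=\infty$, and interior optimality at $\boldsymbol z_*$ reads $F(\boldsymbol z_*) \equiv \tfrac{\beta}{\eta}(\nabla\psi(\boldsymbol z_*) - \nabla\psi(\boldsymbol z_0))$. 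Subtracting the two gives $\boldsymbol\mu_t \equiv \tfrac{\beta}{\eta}(\nabla\psi(\boldsymbol z_t) - \nabla\psi(\boldsymbol z_*)) - (F(\boldsymbol z_t) - F(\boldsymbol z_*))$. Substituting this into the standard mirror-descent three-point identity $D_\psi(\boldsymbol z_*, \boldsymbol z_{t+1}) = D_\psi(\boldsymbol z_*, \boldsymbol z_t) - D_\psi(\boldsymbol z_{t+1}, \boldsymbol z_t) - \eta\langle\boldsymbol\mu_t, \boldsymbol z_* - \boldsymbol z_{t+1}\rangle$ and invoking the identity a second time on the $\nabla\psi$ piece yields the key equality
\begin{equation*}
D_\psi(\boldsymbol z_*, \boldsymbol z_{t+1}) = (1+\beta)\, D_\psi(\boldsymbol z_*, \boldsymbol z_t) - (1+\beta)\, D_\psi(\boldsymbol z_{t+1}, \boldsymbol z_t) + \beta\, D_\psi(\boldsymbol z_{t+1}, \boldsymbol z_*) + \eta\langle F(\boldsymbol z_t) - F(\boldsymbol z_*), \boldsymbol z_* - \boldsymbol z_{t+1}\rangle.
\end{equation*}

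For the residual cross term I would use bilinearity of $F$: monotonicity holds with equality, $\langle F(\boldsymbol z_t) - F(\boldsymbol z_*), \boldsymbol z_t - \boldsymbol z_*\rangle = 0$, so the term collapses to $\eta\langle F(\boldsymbol z_t) - F(\boldsymbol z_*), \boldsymbol z_t - \boldsymbol z_{t+1}\rangle$. Splitting into the $\boldsymbol x$- and $\boldsymbol y$-blocks and using $|G_{ij}|\le 1$ produces an upper bound of the form $\|\boldsymbol y_t - \boldsymbol y_*\|_1\|\boldsymbol x_t - \boldsymbol x_{t+1}\|_1 + \|\boldsymbol x_t - \boldsymbol x_*\|_1\|\boldsymbol y_t - \boldsymbol y_{t+1}\|_1$, after which Pinsker's inequality applied separately on each simplex, followed by Young's inequality with a free parameter $\lambda > 0$, gives an estimate of the shape $\tfrac{\eta}{\lambda}\, D_\psi(\boldsymbol z_*, \boldsymbol z_t) + \eta\lambda\, D_\psi(\boldsymbol z_{t+1}, \boldsymbol z_t)$.

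To close the argument I would discard the non-positive term $\beta D_\psi(\boldsymbol z_{t+1}, \boldsymbol z_*) \le 0$ and pick $\lambda$ so that (i) the coefficient of $D_\psi(\boldsymbol z_{t+1}, \boldsymbol z_t)$ becomes non-positive, allowing it too to be discarded, and (ii) the coefficient of $D_\psi(\boldsymbol z_*, \boldsymbol z_t)$ is at most $1+\beta/2$. The two constraints pin down $\lambda$ and translate into the step-size threshold stated in the theorem; iterating the resulting contraction $t$ times produces the claimed exponential bound. The main obstacle is precisely this final Young-balancing step: the two constraints on $\lambda$ pull in opposite directions, and simultaneous feasibility under the prescribed bound on $\eta$ requires careful accounting of the constants coming from Pinsker's inequality, the block-wise decomposition of the bilinear operator, and the $-\beta/2$ budget supplied by the negative momentum. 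A minor subtlety worth flagging is that $D_\psi(\boldsymbol z_{t+1}, \boldsymbol z_*)$ has the reversed argument order relative to the Lyapunov $D_\psi(\boldsymbol z_*, \boldsymbol z_{t+1})$; because entropy is non-symmetric, only the sign-based discard (valid since $\beta < 0$) is used here, with no appeal to any reverse-KL inequality.
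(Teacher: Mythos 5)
Your proposal is correct and follows essentially the same route as the paper's proof: both derive the one-step recursion $D_\psi(\boldsymbol z_*,\boldsymbol z_{t+1}) \le (1+\beta)D_\psi(\boldsymbol z_*,\boldsymbol z_t) - (1+\beta)D_\psi(\boldsymbol z_{t+1},\boldsymbol z_t) + (\text{error})$ via the mirror-descent three-point identity and the optimality of the regularized equilibrium $\boldsymbol z_*$, then control the error with Young's and Pinsker's inequalities and a balanced choice of the free parameter. The only (immaterial) differences are that you substitute the interior first-order condition of $\boldsymbol z_*$ into $\boldsymbol\mu_t$ up front and use exact bilinear antisymmetry at $(\boldsymbol z_t,\boldsymbol z_*)$, whereas the paper expands the $\beta\log(\boldsymbol z_t/\boldsymbol z_{att})$ terms into KL divergences and applies monotonicity at $(\boldsymbol z_{t+1},\boldsymbol z_*)$ after adding and subtracting $F(\boldsymbol z_{t+1})$; your accounting in fact closes under a slightly weaker step-size restriction than the one stated.
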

\begin{theorem}\label{thrm: DualityGap of MoMWU}
    In the same setup of Theorem \ref{thrm: convergence of MoMWU}, the duality gap for the updated strategy $\boldsymbol{z}_t$ of Algorithm \ref{Restarting Aggregated Momentum} can be bounded as:
    \begin{align*}
        &{DualityGap}(\boldsymbol z_t)
        \\\le&
        \frac{-\beta}{\eta}\cdot\text{diam}(\boldsymbol Z)
        \cdot\Vert{\log\frac{\boldsymbol{z}_{*}}{\boldsymbol{z}_{att}}}\Vert
        +
        \mathcal{O}((1+\frac{\beta}{2})^{\frac{t}{2}}),
    \end{align*}
    where $\text{diam}(\boldsymbol Z)=\sup_{\boldsymbol{z},\boldsymbol{z}'\in\boldsymbol{Z}}\Vert{\boldsymbol{z}-\boldsymbol{z}'}\Vert$,
    ${\boldsymbol{z}_{att}}$ equals $ (\boldsymbol x_{att},\boldsymbol y_{att})=\underset{\boldsymbol{z} \in \mathcal{Z}}{\operatorname{argmin}}\left\{\eta\left\langle\boldsymbol{z}, \boldsymbol L_{att} \right\rangle+\psi (\boldsymbol z)\right\}$. 
\end{theorem}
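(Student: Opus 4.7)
The plan is to decompose the duality gap as
\[
DG(\boldsymbol z_t)=DG(\boldsymbol z_*)+\bigl[DG(\boldsymbol z_t)-DG(\boldsymbol z_*)\bigr]
\]
and bound the two pieces separately: the first is a stationary \emph{bias} produced by the extra regularization in the modified game (\ref{eq: modified game}), while the second vanishes because $\boldsymbol z_t\to\boldsymbol z_*$ at the rate supplied by Theorem \ref{thrm: convergence of MoMWU}. Writing $\lambda=-\beta/\eta>0$, the modified payoff is $\tilde f(\boldsymbol x,\boldsymbol y)=\boldsymbol x^{\top}\boldsymbol G\boldsymbol y+\lambda D_\psi(\boldsymbol x,\boldsymbol x_{att})-\lambda D_\psi(\boldsymbol y,\boldsymbol y_{att})$, and $\boldsymbol z_*$ is its saddle point on $\mathcal X\times\mathcal Y$.

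For the bias, I would use the first-order optimality condition of $\boldsymbol y_*$ as maximizer of $\tilde f(\boldsymbol x_*,\cdot)$ over the simplex: for every $\boldsymbol y\in\mathcal Y$, $\langle\boldsymbol G^{\top}\boldsymbol x_*,\boldsymbol y-\boldsymbol y_*\rangle\le\lambda\,\langle\nabla_y D_\psi(\boldsymbol y_*,\boldsymbol y_{att}),\boldsymbol y-\boldsymbol y_*\rangle$. Because $\psi$ is the negative entropy, $\nabla_y D_\psi(\boldsymbol y_*,\boldsymbol y_{att})=\log(\boldsymbol y_*/\boldsymbol y_{att})$, and H\"older's inequality yields $\max_{\boldsymbol y\in\mathcal Y}\boldsymbol x_*^{\top}\boldsymbol G\boldsymbol y-\boldsymbol x_*^{\top}\boldsymbol G\boldsymbol y_*\le\lambda\,\text{diam}(\mathcal Y)\,\|\log(\boldsymbol y_*/\boldsymbol y_{att})\|$. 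A symmetric argument for the min-player produces the analogous bound in $\log(\boldsymbol x_*/\boldsymbol x_{att})$, and summing the two gives the static term $\frac{-\beta}{\eta}\,\text{diam}(\mathcal Z)\,\|\log(\boldsymbol z_*/\boldsymbol z_{att})\|$ exactly as in the claim.

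For the convergence error, I would use the elementary Lipschitzness of the duality gap: since entries of $\boldsymbol G$ lie in $[-1,1]$ and strategies live in the simplex, both $|\max_{\boldsymbol y}\boldsymbol x_t^{\top}\boldsymbol G\boldsymbol y-\max_{\boldsymbol y}\boldsymbol x_*^{\top}\boldsymbol G\boldsymbol y|$ and $|\min_{\boldsymbol x}\boldsymbol x^{\top}\boldsymbol G\boldsymbol y_t-\min_{\boldsymbol x}\boldsymbol x^{\top}\boldsymbol G\boldsymbol y_*|$ are bounded by the corresponding $L_1$ distances, so $|DG(\boldsymbol z_t)-DG(\boldsymbol z_*)|\le\|\boldsymbol z_t-\boldsymbol z_*\|_1$. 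Pinsker's inequality then converts the KL-type bound of Theorem \ref{thrm: convergence of MoMWU} into $\|\boldsymbol z_t-\boldsymbol z_*\|_1\le 2\sqrt{D_\psi(\boldsymbol z_*,\boldsymbol z_0)}\,(1+\beta/2)^{t/2}=\mathcal O\bigl((1+\beta/2)^{t/2}\bigr)$. Adding the bias and the error estimates delivers the announced bound.

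The main obstacle is the bias calculation: one has to correctly identify the Bregman gradient $\nabla_y D_\psi(\boldsymbol y_*,\boldsymbol y_{att})=\log(\boldsymbol y_*/\boldsymbol y_{att})$, which is specific to the negative entropy and to the convention that $\boldsymbol z_{att}$ is the proximal point associated with $\boldsymbol L_{att}$, and then pair norms through H\"older so that exactly $\text{diam}(\mathcal Z)\cdot\|\log(\boldsymbol z_*/\boldsymbol z_{att})\|$ appears rather than a looser Bregman diameter. The convergence-error half is essentially bookkeeping around Pinsker's inequality and the exponential decay rate already established in Theorem \ref{thrm: convergence of MoMWU}.
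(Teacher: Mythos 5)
Your proposal is correct and follows essentially the same route as the paper: the same bias-plus-convergence-error decomposition, the same use of the modified game's first-order optimality condition together with $\nabla D_\psi(\boldsymbol y_*,\boldsymbol y_{att})=\log(\boldsymbol y_*/\boldsymbol y_{att})$ and H\"older/Cauchy--Schwarz to extract $\frac{-\beta}{\eta}\,\text{diam}(\boldsymbol Z)\,\Vert\log(\boldsymbol z_*/\boldsymbol z_{att})\Vert$, and the same Lipschitz-plus-Pinsker argument converting the rate of Theorem~\ref{thrm: convergence of MoMWU} into the $\mathcal O((1+\beta/2)^{t/2})$ term. The only cosmetic difference is that you bound the error term via direct $L_1$-Lipschitzness of the duality gap, whereas the paper routes it through $\Vert F(\boldsymbol z_*)\Vert$ and the Lipschitz constant of $F$; both yield the same conclusion.
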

\begin{theorem}\label{thrm: convergence to NE}
    Algorithm \ref{Restarting Aggregated Momentum} with $\psi(p)=\langle{p,\ln p}\rangle$ converges to the set of Nash equilibria. 
\end{theorem}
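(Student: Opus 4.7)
The plan is to recast the outer loop of Algorithm \ref{Restarting Aggregated Momentum} --- the periodic refresh of $\boldsymbol{L}_{att}$ every $k$ iterations --- as an inexact Bregman proximal-point iteration on the monotone operator $F(\boldsymbol{z})=(\boldsymbol{G}\boldsymbol{y},-\boldsymbol{G}^{\top}\boldsymbol{x})$ associated with the bilinear game, and then to combine the inner-loop accuracy provided by Theorem \ref{thrm: convergence of MoMWU} with a Fej\'er-monotonicity argument on the outer loop to drive the attachment points into $\mathcal{Z}^{*}$.

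First I would index epochs by $n=0,1,\ldots$ of length $k$, write $\boldsymbol{z}_{att}^{(n)}$ for the attachment strategy at the start of epoch $n$, and set $\lambda=-\beta/\eta>0$. The first-order condition for the NE $\boldsymbol{z}_{*}^{(n)}$ of the modified game \eqref{eq: modified game} reads
\[
\langle F(\boldsymbol{z}_{*}^{(n)})+\lambda(\nabla\psi(\boldsymbol{z}_{*}^{(n)})-\nabla\psi(\boldsymbol{z}_{att}^{(n)})),\,\boldsymbol{z}-\boldsymbol{z}_{*}^{(n)}\rangle\ge 0 \quad\forall\,\boldsymbol{z}\in\mathcal{Z},
\]
so $\boldsymbol{z}_{*}^{(n)}$ is exactly the Bregman resolvent $J_{\lambda F}^{\psi}(\boldsymbol{z}_{att}^{(n)})$. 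Within epoch $n$ the attachment is frozen, so Theorem \ref{thrm: convergence of MoMWU} applies and gives
\[
D_{\psi}(\boldsymbol{z}_{*}^{(n)},\boldsymbol{z}_{att}^{(n+1)})\le D_{\psi}(\boldsymbol{z}_{*}^{(n)},\boldsymbol{z}_{att}^{(n)})\,(1+\beta/2)^{k}.
\]
Since $\mathcal{Z}$ is compact and $\psi$ is the negative entropy, the right-hand side is uniformly bounded, so taking $k$ sufficiently large makes the per-epoch inexactness $\epsilon_{k}$ summable across the outer loop.

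Next I would apply the three-point identity for Bregman divergences to the above VI characterization of $\boldsymbol{z}_{*}^{(n)}$, which for any genuine NE $\boldsymbol{z}^{\star}\in\mathcal{Z}^{*}$ produces
\[
D_{\psi}(\boldsymbol{z}^{\star},\boldsymbol{z}_{*}^{(n)})+D_{\psi}(\boldsymbol{z}_{*}^{(n)},\boldsymbol{z}_{att}^{(n)})\le D_{\psi}(\boldsymbol{z}^{\star},\boldsymbol{z}_{att}^{(n)})-\tfrac{1}{\lambda}\langle F(\boldsymbol{z}_{*}^{(n)}),\boldsymbol{z}_{*}^{(n)}-\boldsymbol{z}^{\star}\rangle.
\]
Skew-symmetry of the bilinear $F$ combined with the saddle-point inequality for $\boldsymbol{z}^{\star}$ renders the last inner product non-negative, so the exact proximal iterates $\{\boldsymbol{z}_{*}^{(n)}\}$ are Fej\'er-monotone with respect to every $\boldsymbol{z}^{\star}$ and the residuals $D_{\psi}(\boldsymbol{z}_{*}^{(n)},\boldsymbol{z}_{att}^{(n)})$ telescope to a finite total.

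Finally I would splice the inner-loop error from Theorem \ref{thrm: convergence of MoMWU} into the exact Fej\'er inequality, using the three-point identity once more to expand $D_{\psi}(\boldsymbol{z}^{\star},\boldsymbol{z}_{att}^{(n+1)})=D_{\psi}(\boldsymbol{z}^{\star},\boldsymbol{z}_{*}^{(n)})+D_{\psi}(\boldsymbol{z}_{*}^{(n)},\boldsymbol{z}_{att}^{(n+1)})+\langle \nabla\psi(\boldsymbol{z}_{*}^{(n)})-\nabla\psi(\boldsymbol{z}_{att}^{(n+1)}),\boldsymbol{z}^{\star}-\boldsymbol{z}_{*}^{(n)}\rangle$. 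Standard arguments for inexact Bregman proximal-point methods then yield (i) $D_{\psi}(\boldsymbol{z}^{\star},\boldsymbol{z}_{att}^{(n)})$ is bounded and admits a limit; (ii) $D_{\psi}(\boldsymbol{z}_{*}^{(n)},\boldsymbol{z}_{att}^{(n)})\to 0$; (iii) every cluster point of $\{\boldsymbol{z}_{att}^{(n)}\}$ is a fixed point of $J_{\lambda F}^{\psi}$ and hence lies in $\mathcal{Z}^{*}$. An Opial-type lemma then upgrades this to convergence of the whole outer sequence, and Theorem \ref{thrm: convergence of MoMWU} transfers the conclusion from the epoch endpoints to all iterates. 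The hardest step, I expect, is controlling the asymmetric cross term $\langle \nabla\psi(\boldsymbol{z}_{*}^{(n)})-\nabla\psi(\boldsymbol{z}_{att}^{(n+1)}),\boldsymbol{z}^{\star}-\boldsymbol{z}_{*}^{(n)}\rangle$ introduced by $D_{\psi}$ lacking a triangle inequality: naive Cauchy--Schwarz blows up near the boundary of the simplex, and I plan to exploit the closed form $\nabla\psi(p)=\mathbf{1}+\log p$ of the negative entropy together with the strong $D_{\psi}$-contraction of Theorem \ref{thrm: convergence of MoMWU} to convert the inner-loop bound into an $L^{1}$ control of the perturbation.
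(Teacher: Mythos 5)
Your high-level framing is sound and close in spirit to the paper's: both arguments treat the periodic refresh of $\boldsymbol{L}_{att}$ as an (approximate) Bregman proximal-point step whose fixed points are Nash equilibria, and your identification of the modified equilibrium with the resolvent $J_{\lambda F}^{\psi}(\boldsymbol{z}_{att}^{(n)})$, the exact Fej\'er inequality via the three-point identity, and the use of skew-symmetry to kill the cross term are all correct. (One minor caveat: Theorem~\ref{thrm: convergence of MoMWU} is stated for $\boldsymbol{L}_{att}=0$, so you should note that its proof goes through verbatim for an arbitrary frozen $\boldsymbol{L}_{att}$ before invoking it epoch by epoch.)

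The genuine gap is the summability claim. With a \emph{fixed} restart interval $k$, the per-epoch inexactness is bounded by $\epsilon_k \approx C\,(1+\beta/2)^{k}$ with $C=\sup_n D_{\psi}(\boldsymbol{z}_{*}^{(n)},\boldsymbol{z}_{att}^{(n)})$; this quantity does not decay in the epoch index $n$, so $\sum_{n}\epsilon_k=\infty$ no matter how large $k$ is. The standard inexact Bregman proximal-point theory you invoke requires summable errors, and without that you only obtain convergence to a neighborhood of $\mathcal{Z}^{*}$ of radius depending on $\epsilon_k$. You cannot repair this by appealing to a uniform per-epoch contraction toward $\mathcal{Z}^{*}$ either: for a merely monotone (skew) $F$ the resolvent is firmly nonexpansive but not a strict contraction, so the exact decrease $\Delta V(\boldsymbol{z}_{att})$ vanishes as $\boldsymbol{z}_{att}$ approaches $\mathcal{Z}^{*}$ and cannot dominate a fixed $\epsilon_k>0$ indefinitely. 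The paper sidesteps this entirely by analyzing the \emph{exact} outer map $\boldsymbol{z}_{att}\mapsto\boldsymbol{z}_{att}^{*}$ (i.e., assuming each epoch reaches the regularized equilibrium, consistent with the ``sufficiently large $k$'' reading in the main text) and running a purely qualitative Lyapunov argument on $V(\boldsymbol{z}_{att})=\min_{\boldsymbol{z}_{*}\in\mathcal{Z}_{*}}D_{\psi}(\boldsymbol{z}_{*},\boldsymbol{z}_{att})$: strict decrease off $\mathcal{Z}_{*}$ plus continuity of the resolvent give a uniformly negative decrement on any compact set bounded away from $\mathcal{Z}_{*}$, and a contradiction forces $V\to 0$. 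If you adopt the same exact-inner-loop idealization, your Fej\'er route closes as well (and, unlike the paper's, would quantify how the error propagates); but as written, the step ``$k$ large $\Rightarrow$ errors summable'' is false and the subsequent Opial argument does not go through.
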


We next further extend the concept of negative momentum to RM/$\text{RM}^{+}$.
According to recent work on the interesting connection between FTRL/OMD and RM/$\text{RM}^{+}$~\cite{DBLP:conf/aaai/FarinaKS21}, the regret update $\boldsymbol R^{x}_{t+1} = \left[\boldsymbol R^{x}_{t} + \boldsymbol r(\boldsymbol x_{t})\right]^{+}$ of $\text{RM}^{+}$ can be reformulated as:
\begin{equation}
\boldsymbol{R}_{t+1} =\underset{\hat{\boldsymbol{R}} \in \mathbb{R}^{M}_{+}}{\operatorname{argmin}}\left\{\eta\langle\hat{\boldsymbol{R}}, \boldsymbol r(\boldsymbol x_{t})\rangle+D_{\psi}(\hat{\boldsymbol{R}}, \boldsymbol{R}_{t})\right\}, 
\end{equation}
where $\psi = \frac{1}{2}\|\cdot\|^{2}$ and $\eta=-1$. Therefore, $\text{RM}^{+}$ is intricately linked to OMD instantiated with the non-negative orthant (after thresholding) as the decision set and facing a sequence of loss $(\boldsymbol r(\boldsymbol x_{1}))_{t \geq 1}$. Formally, Lemma \ref{OMD=RM+} draws the relation for the regret in the strategy sequence $\boldsymbol{x}_{1},...\boldsymbol{x}_{T}$ and the regret $\boldsymbol{R}_{1},...,\boldsymbol{R}_{T}$.
\begin{lemma}
\label{OMD=RM+}
\cite{farina2023regretmatchinginstabilityfast} Let $\boldsymbol{x}_{1},...\boldsymbol{x}_{T} \in \Delta^{M}$ be generated as $\boldsymbol x_{t} = \left[\boldsymbol R^{x}_{t}\right]^{+}\big/\|\left[\boldsymbol R^{x}_{t}\right]^{+}\|_{1}$ for some sequence $R^{x}_{1},...R^{x}_{T} \in \mathbb{R}^{M}_{+}$. The regret 
$R_{T, \hat{\boldsymbol{x}}}$ of $\boldsymbol{x}_{1},...\boldsymbol{x}_{T}$ facing a sequence of loss $F(\boldsymbol{x}_{1}),...F(\boldsymbol{x}_{T})$ is equal to $R_{T, \hat{\boldsymbol{R}}}$, i.e., the regret of $\boldsymbol{R}_{1},...,\boldsymbol{R}_{T}$ facing the sequence of loss $\boldsymbol r(\boldsymbol x_{1}),...\boldsymbol r(\boldsymbol x_{T})$, compared against $\hat{\boldsymbol{R}} = \hat{\boldsymbol{x}}$: $R_{T, \hat{\boldsymbol{R}}} = \sum^{T}_{t=1} \langle r(\boldsymbol x_{t}), \boldsymbol{R}_{t} - \hat{\boldsymbol{R}}\rangle$.

\end{lemma}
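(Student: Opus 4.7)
The plan is to prove Lemma~\ref{OMD=RM+} by a direct per-round computation that compares the right-hand side $\sum_{t=1}^{T}\langle \boldsymbol{r}(\boldsymbol{x}_t), \boldsymbol{R}_t - \hat{\boldsymbol{R}}\rangle$ with the definition $R_{T,\hat{\boldsymbol{x}}} = \sum_{t=1}^{T}\langle F(\boldsymbol{x}_t), \boldsymbol{x}_t - \hat{\boldsymbol{x}}\rangle$ coming from Equation~\eqref{regret definition}. The structural hypothesis to exploit is that each $\boldsymbol{x}_t$ is obtained by normalizing $\boldsymbol{R}_t$; since $\boldsymbol{R}_t\in\mathbb{R}^{M}_{+}$ the thresholding $[\cdot]^{+}$ is inactive, yielding the clean collinearity $\boldsymbol{R}_t = \|\boldsymbol{R}_t\|_1\cdot \boldsymbol{x}_t$ that the whole argument revolves around.

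The pivotal step is showing that the ``self'' term $\langle \boldsymbol{r}(\boldsymbol{x}_t), \boldsymbol{R}_t\rangle$ vanishes for every $t$. Unfolding with $\boldsymbol{r}(\boldsymbol{x}_t) = \langle \boldsymbol{x}_t, F(\boldsymbol{x}_t)\rangle\boldsymbol{1}_L - F(\boldsymbol{x}_t)$ and using $\langle \boldsymbol{1}_L, \boldsymbol{R}_t\rangle = \|\boldsymbol{R}_t\|_1$, this inner product becomes $\langle \boldsymbol{x}_t, F(\boldsymbol{x}_t)\rangle\|\boldsymbol{R}_t\|_1 - \langle F(\boldsymbol{x}_t), \boldsymbol{R}_t\rangle$; substituting the collinearity $\boldsymbol{R}_t=\|\boldsymbol{R}_t\|_1\boldsymbol{x}_t$ into the second inner product cancels both contributions exactly. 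Thus the sequence $\{\boldsymbol{R}_t\}$ has zero instantaneous regret against itself, which eliminates half of the right-hand sum.

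It then remains to analyze $\sum_{t=1}^{T}\langle \boldsymbol{r}(\boldsymbol{x}_t), \hat{\boldsymbol{R}}\rangle$ at $\hat{\boldsymbol{R}} = \hat{\boldsymbol{x}}\in\Delta^{M}$. Using $\langle \boldsymbol{1}_L, \hat{\boldsymbol{x}}\rangle = 1$, one obtains the per-round identity $\langle \boldsymbol{r}(\boldsymbol{x}_t), \hat{\boldsymbol{x}}\rangle = \langle F(\boldsymbol{x}_t), \boldsymbol{x}_t\rangle - \langle F(\boldsymbol{x}_t), \hat{\boldsymbol{x}}\rangle$, which is exactly the summand of $R_{T,\hat{\boldsymbol{x}}}$. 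Combining with the vanishing self-term and telescoping over $t$ yields $\sum_{t=1}^{T}\langle \boldsymbol{r}(\boldsymbol{x}_t), \boldsymbol{R}_t - \hat{\boldsymbol{R}}\rangle = R_{T,\hat{\boldsymbol{x}}}$, which is the asserted equality.

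There is very little ``hard'' content here: the only non-routine observation is that the normalization in the $\text{RM}^{+}$ update rule forces $\boldsymbol{R}_t$ to be a positive multiple of $\boldsymbol{x}_t$, and this collinearity is precisely what makes the cross-term $\langle \boldsymbol{r}(\boldsymbol{x}_t), \boldsymbol{R}_t\rangle$ telescope to zero. The remaining work is careful bookkeeping of inner products and keeping the sign convention for $\boldsymbol{r}$ consistent with the paper's preceding remark that $\text{RM}^{+}$ is equivalent to OMD on the loss $\boldsymbol{r}$ with step size $\eta=-1$; once this is settled the two regret quantities coincide not just in magnitude but identically, so that any regret bound obtained for the $\boldsymbol{R}_t$ sequence via OMD immediately transfers to the strategy iterates $\boldsymbol{x}_t$.
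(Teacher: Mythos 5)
The paper itself gives no proof of this lemma --- it is imported from \citet{farina2023regretmatchinginstabilityfast} --- so there is no in-paper argument to compare against. Your route is the standard one, and your two key identities are both correct: the collinearity $\boldsymbol{R}_t=\Vert\boldsymbol{R}_t\Vert_1\,\boldsymbol{x}_t$ (valid because $\boldsymbol{R}_t\in\mathbb{R}^M_+$ makes the thresholding inactive) forces $\langle \boldsymbol{r}(\boldsymbol{x}_t),\boldsymbol{R}_t\rangle=0$, and $\langle\boldsymbol{1}_L,\hat{\boldsymbol{x}}\rangle=1$ gives $\langle \boldsymbol{r}(\boldsymbol{x}_t),\hat{\boldsymbol{x}}\rangle=\langle F(\boldsymbol{x}_t),\boldsymbol{x}_t-\hat{\boldsymbol{x}}\rangle$, which is exactly the summand of $R_{T,\hat{\boldsymbol{x}}}$.

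However, your final assembly has a sign inconsistency that you should not paper over. From your own two identities,
\begin{equation*}
\sum_{t=1}^{T}\langle \boldsymbol{r}(\boldsymbol{x}_t),\,\boldsymbol{R}_t-\hat{\boldsymbol{R}}\rangle
=\sum_{t=1}^{T}\bigl(0-\langle \boldsymbol{r}(\boldsymbol{x}_t),\hat{\boldsymbol{x}}\rangle\bigr)
=-\sum_{t=1}^{T}\langle F(\boldsymbol{x}_t),\boldsymbol{x}_t-\hat{\boldsymbol{x}}\rangle
=-R_{T,\hat{\boldsymbol{x}}},
\end{equation*}
which is the \emph{negative} of what you assert in your last step. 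The resolution is that the displayed formula in the lemma should be read with the comparator difference oriented as $\hat{\boldsymbol{R}}-\boldsymbol{R}_t$ (equivalently, the effective loss seen by the OMD interpretation is $-\boldsymbol{r}$, consistent with the paper's remark that $\mathrm{RM}^{+}$ is OMD with $\eta=-1$; the vector $\boldsymbol{r}$ is a reward, not a loss). A correct proof must either state and use this orientation or explicitly note the sign convention being adopted; as written, your intermediate steps prove the negation of your concluding line. One further small point: when $\Vert[\boldsymbol{R}_t]^{+}\Vert_1=0$ the normalization is undefined and $\boldsymbol{x}_t$ is chosen by convention, but then $\boldsymbol{R}_t=0$ and the self-term vanishes trivially, so the identity survives --- worth a sentence so the collinearity claim is not vacuously wrong in that round.
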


\begin{algorithm}[tb]
   \caption{Momentum $\text{RM}^{+}$ (Mo$\text{RM}^{+}$)}
   \label{Mo RM+}
\begin{algorithmic}[1]
   \STATE $(\boldsymbol R^{x}_{t}, \boldsymbol R^{y}_{t}) = 0, (\boldsymbol x_{0}, \boldsymbol y_{0}) \in \mathcal{Z}$ 
   \FOR{$t=0$ {\bfseries to} $T$}
   \STATE $\boldsymbol R^{x}_{t+1} = \left[\boldsymbol R^{x}_{t} + \boldsymbol r({x}_{t}) - {\beta(\boldsymbol R^{x}_{att} - \boldsymbol R^{x}_{t})}\right]^{+}$
   \STATE $\boldsymbol x_{t+1} = \left[\boldsymbol R^{x}_{t+1}\right]^{+}\big/\|\left[\boldsymbol R^{x}_{t+1}\right]^{+}\|_{1}$
   \STATE $\boldsymbol R^{y}_{t+1} = \left[\boldsymbol R^{y}_{t} + \boldsymbol r({y}_{t}) - {\beta(\boldsymbol R^{y}_{att} - \boldsymbol R^{y}_{t})}\right]^{+}$
   \STATE $\boldsymbol y_{t+1} = \left[\boldsymbol R^{y}_{t+1}\right]^{+}\big/\|\left[\boldsymbol R^{y}_{t+1}\right]^{+}\|_{1}$
   \IF{$t \ \% \ k = 0$}
    \STATE Update the attachment regret vector \\$\boldsymbol R^{x}_{att} \leftarrow \boldsymbol R^{x}_{t-1}$, $\boldsymbol R^{y}_{att} \leftarrow \boldsymbol R^{y}_{t-1}$ 
   \ENDIF   
   \ENDFOR
\end{algorithmic}
\end{algorithm}

In light of this correspondence, we integrate the updating rules within  RAM into the regret updating framework of $\text{RM}^{+}$, leading to the introduction of variant referred to as \textbf{$\text{Momentum RM}^{+}$} (Algorithm \ref{Mo RM+}).

\textbf{Implementation for Extensive-Form Game (EFG).} Here we briefly  elucidate the process of extending our algorithm to accommodate EFG settings. Generally, there are two approaches to render the resolution of EFG problems computationally feasible. One avenue of research employs the regret decomposition framework~\cite{farina2019online} adopted by the Counterfactual Regret Minimization (CFR)~\cite{zinkevich2007regret} family. This framework is founded on the concept that the global regret of the entire game can be decomposed into the summation  of local regrets associated with each simplex corresponding to a decision node in EFG. We employ Mo$\text{RM}^{+}$ as the local regret minimizer, leading to the algorithm named \textbf{Mo$\text{CFR}^{+}$}.

Another line of research formulates EFG as a bilinear SPP over the \textit{sequence-form} strategy, where $\mathcal{X}$ and $\mathcal{Y}$ represent sequence-form polytopes, which equivalently can be viewed as treeplexes~\cite{hoda2010smoothing}. When employing classical first-order methods such as OMD/FTRL, the class of \textit{dilated distance generating function}~\cite{DBLP:conf/nips/FarinaKS19} becomes crucial for efficient computation and convexity guarantees. We create the DMoMD algorithm by applying the dilated mapping to MoMD. Details on EFGs and our extensive-form algorithms are in Appendix \ref{extensive-form bg}.

\section{Experiments}

In this section, we validate our methods utilizing the exploitability metric under two experimental settings of NFGs and EFGs. 
Detailed game description and hyper-parameters can be found in the Appendix \ref{sec: Experiment Settings}. 

\subsection{Normal-Form Games}

For tabular normal-form games, we conduct experiments on randomly generated NFGs with action dimensions of $25$, $50$, and $75$.  The payoff matrix is drawn from a standard Gaussian distribution with i.i.d realizations using different seed ranging from 0 to 10, and constrains its elements to lie within the range $\left[-1, 1\right]$ after normalization. 
We also verify our algorithms on a $3 \times 3$ game matrix $G = \left[\left[3,0,-3\right],\left[0,3,-4\right],\left[0,0,1\right]\right]$ that has the a unique Nash equilibrium $\left(\boldsymbol{x}^{*}, \boldsymbol{y}^{*}\right) = \left(\left[\frac{1}{12}, \frac{1}{12},\frac{5}{6}\right],\left[\frac{1}{3}, \frac{5}{12},\frac{1}{4}\right]\right)$, which has also been used in \citet{farina2023regretmatchinginstabilityfast} to illustrate the slow ergodic convergence of $\text{RM}^{+}$.

We compare MoMWU and $\text{MoRM}^{+}$ to average-iterate convergent algorithms (RM, $\text{RM}^{+}$) and last-iterate convergent 
algorithms (OMWU, OGDA). We also include the comparison to 
the Magnet Mirror Descent (MMD) 
~\cite{DBLP:conf/iclr/SokotaDKLLMBK23}.
MMD can be adapted into an NE solver by either annealing the amount of regularization over time (MMD-A) or by employing a moving magnet strategy that trails behind the current iteration (MMD-M). 

Note that for $\text{RM}^{+}$ and $\text{MoRM}^{+}$, we employ alternating updates, while other algorithms are kept with their default settings, i.e., simultaneous updates. We plot the average strategy for RM, linear average strategy for $\text{RM}^{+}$ and current strategy for other algorithms. We set the uniform strategy as the initial magnet strategy for MMD-M.

\begin{figure}[ht]
\begin{center}
\centerline{\includegraphics[width=\columnwidth]{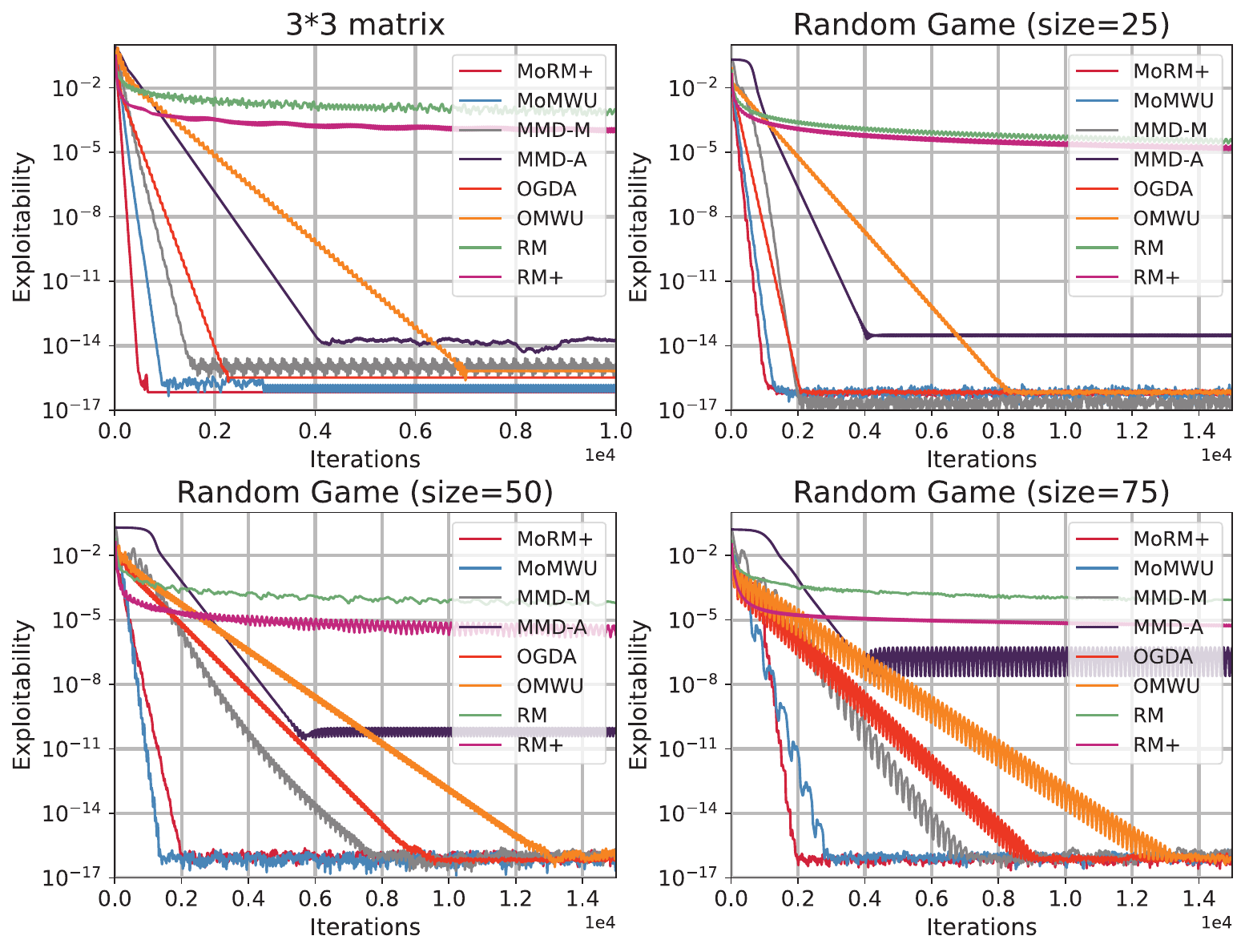}}
\caption{The performance evaluation of the momentum variants and other baseline algorithms in NFGs. In all plots, the x-axis represents the number of iterations for each algorithm, while the y-axis, presented on a logarithmic scale, illustrates the exploitability.}
\label{NFG results}
\end{center}
\end{figure}

The results of NFGs are illustrated in Figure \ref{NFG results}. 
Specifically, in the specific $3 \times 3$ matrix game, $\text{MoRM}^{+}$ effectively overcomes the slow ergodic convergence of $\text{RM}^{+}$ as observed in \cite{cai2023lastiterateconvergencepropertiesregretmatching}, showcasing a notable improvement in convergence speed. This observation underscores the beneficial impact of the momentum technique on enhancing convergence properties. The results of the random matrix game experiments indicate that our momentum variants ($\text{MoRM}^{+}$, MoMWU) consistently demonstrate the fastest convergence rates across all games when compared to their original versions and other rapidly convergent algorithms. Moreover, our algorithms exhibit stability and robustness as the game size increases, whereas optimistic methods (OMWU, OGDA) display a slight oscillation and slower convergence. This observation is further substantiated in subsequent experiments on EFGs.

\subsection{Extensive-Form Games}

In the tabular setting of EFGs, we utilize games implemented in OpenSpiel~\cite{lanctot2020openspielframeworkreinforcementlearning}, encompassing Kuhn Poker, Goofspiel (4/5 cards), Liar’s dice (4/5 sides), and Leduc Poker. We evaluate DMoMD with L2-norm (DMoGDA) and $\text{MoCFR}^{+}$ against average-iterate convergent algorithms (CFR, $\text{CFR}^{+}$). Additionally, predictive update algorithms (OGDA, $\text{PCFR}^{+}$~\cite{DBLP:conf/aaai/FarinaKS21})
along with the regularization algorithms (MMD-M, Reg-CFR, and Reg-DOGDA~\cite{liu2023powerregularizationsolvingextensiveform})
are included in the comparison. Note that for algorithms incorporating  $\text{RM}^{+}$ updating rules ($\text{CFR}^{+}$, $\text{PCFR}^{+}$, and $\text{MoCFR}^{+}$), we employ alternating updates, while other algorithms use simultaneous updates. We present  the average strategy for CFR, linear average strategy for $\text{CFR}^{+}$ and the quadratic average strategy for $\text{PCFR}^{+}$, as recommended in their default setting. Other algorithms are evaluated based on their current strategy. The uniform strategy is set as the initial magnet strategy for MMD-M. 

\begin{figure}[ht]
\begin{center}
\centerline{\includegraphics[width=\columnwidth]{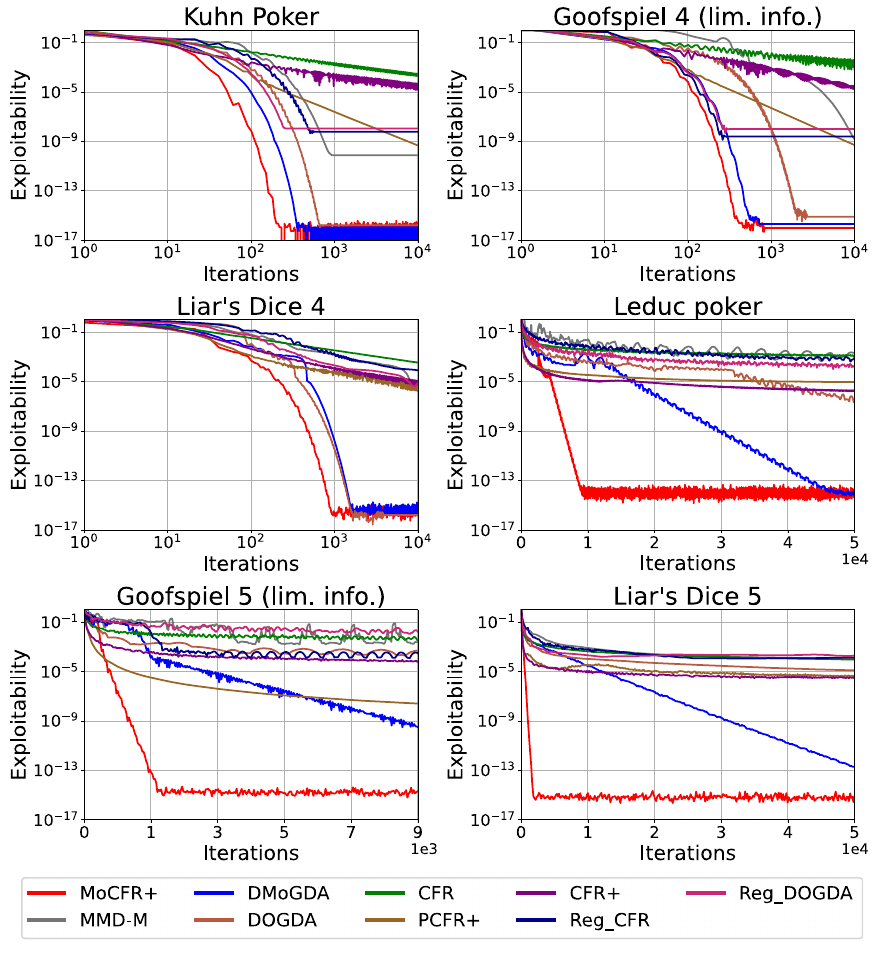}}
\caption{Evaluating the momentum variants and baselines in EFGs. Results are arranged by game sizes. The first three games use a logarithmic x-axis for clearer presentation.}
\label{EFG results}
\end{center}
\end{figure}

Figure \ref{EFG results} illustrates the results in EFGs. Our momentum-augmented algorithms significantly enhance the performance across all games, demonstrating superior empirical convergence rates and substantially lower final exploitability. 
Remarkably, $\text{MoCFR}^{+}$ consistently attains significantly lower exploitability compared to $\text{CFR}^{+}$ and $\text{PCFR}^{+}$. To the best of our knowledge, this marks the first instance where an algorithm surpasses $\text{CFR}^{+}$ performance across various types of games. Additionally, DMoGDA also demonstrates rapid convergence, outpacing all the baselines by a considerable margin. This represents the first occurrence where GDA-type algorithms demonstrate remarkable performance, outperforming SOTA variants of CFR in larger games.

Similar to the observations in NFGs, predictive updates suffer substantial performance degradation with the increasing game size. In contrast, our algorithms can maintain robustness and consistently demonstrate favorable convergence results. An intuitive conjecture posits that, in contract to the friction-like dynamic induced by the negative momentum, the incorporation of predictive updates introduces supplementary second-order information, which can be likened to the application of centripetal force in the learning dynamics, serving to effectively alleviate oscillations~\cite{peng2020training}. However, as the game expands, the escalating non-transitivity poses a significant challenge~\cite{czarnecki2020real}.Methods that utilizes the variations in two consecutive gradient, such as optimistic/extra-gradient, may inadequately furnish the requisite energy to proficiently guide the gradient and disentangle the circular behavior in the learning dynamics. In contrast, our momentum updating framework utilizes a history of past gradients within a period, providing increased energy to mitigate oscillations. This potentially elucidates the robust performance of momentum-augmented algorithms in larger games.

\section{Conclusion}
In this paper, we extend the negative momentum updating paradigm from the unconstrained setting to the constrained setting, seamlessly integrating it with classical algorithms. We formulate the momentum-augmented versions of FTRL, OMD, and \text{RM+}. Leveraging regret decomposition and the dilated distance generate function, we introduce \text{MoCFR+} and DMoMD for solving EFGs. Experiments conducted across numerous benchmark games, demonstrate that momentum-augmented algorithms significantly outperform SOTA algorithms on all tested games.

This work paves the way for some potential research directions. For example, enhancing performance through the combination of predictive and momentum-augmented approaches for improvements is a valuable consideration~\cite{huang2022new}.  Additionally, the integration of other momentum updating paradigms from the literature, such as complex momentum~\cite{lorraine2022complex}, holds promise for achieving faster convergence rates or convergence guarantees over more general settings.

\section{Acknowledgments}
We gratefully acknowledge the support from the National Natural Science Foundation of China (No. 62076259, 62402252), the Fundamental and Applicational Research Funds of Guangdong Province (No. 2023A1515012946), the Fundamental Research Funds for the Central Universities Sun Yat-sen University, and the Pengcheng Laboratory Project (PCL2023A08, PCL2024Y02).


\renewcommand\thesubsection{\Alph{subsection}}

\onecolumn
\section{Appendix}
\setcounter{secnumdepth}{2}
\subsection{Momentum-augmented Algorithms in Extensive-form Games}
\label{extensive-form bg}

\subsubsection{Sequential Decision Process}

We follows the work by \citet{DBLP:conf/icml/FarinaKBS19} and \citet{DBLP:conf/icml/0004J0L22} and introduces the sequence form for sequential decision process, which effectively captures the players’ decision process in two-player zero-sum EFGs. 
A sequential decision process consists of two kinds of points: \textit{decision points} and \textit{observation points}. The set of decision points is denoted by $J$, while the set of observation points is denoted as $K$. At each decision point $j \in J$, the player is assigned to make a \textit{(local) decision} $\hat{\boldsymbol{x}}_j \in \Delta^{n_j}$, where $\Delta^{n_j}$ is a simplex over the action set $A_j$ and $n_j = |A_j|$.
The set of all actions is denoted by $A$. The combination of $\hat{\boldsymbol{x}}_j$ across all decision points constitute a (\textit{behaviour form}) \textit{strategy}. Let $\hat{\boldsymbol{x}}_j[a]$
be the probability of choosing \textit{action} $a \in A_j$. Each action leads the player to an observation point $k \in K$, denoted
by $k = \rho(j, a)$. At each observation point, the player acquires a signal $s \in S_k$. Following signal observation, the player transitions to another decision point $j' \in J$, written as $j' = \rho(k, s)$.

Given a specific action $a$ at $j$, the set of possible decision points that player may next face is denoted by $C_{j,a}$. It can be an empty set if no more actions are taken after $j,a$. Formally, $C_{j,a} = \{\rho(\rho(j, a), s) : s \in S_{\rho(j,a)}\}$. If $j' \in C_{j,a}$, we say that $j'$ is a child of $j$, and $j$ is the parent of $j'$. $C_{j,a}$ can be thought of as representing the different decision points that an player may face after taking action $a$ and then making an observation on which she can condition her next action choice. The set of all descending decision points of $j$ (including $j$) can be denoted recursively as $C_{\downarrow j} = \{j\} \cup \bigcup_{j' \in C_{j,a}, a \in A_j} C_{\downarrow j'}$. We assume that the process forms a tree. In other words, $C_{j,a} \cap C_{j',a'} = \emptyset$ for any $(j, a) \neq (j', a')$. This is equivalent to the perfect-recall assumption in EFGs.  In practical scenarios, there might be a requirement for multiple root decision points, such as when modeling various starting hands in card games. The inclusion of multiple root decision points can be addressed by introducing a dummy root decision point with a single action. Henceforth, we assume a sequential decision process always starts from a decision point, named the root decision point and denoted by $o$.

\textbf{Sequence Form for Sequential Decision Processes} The expected loss for a given behaviour-form strategy, as defined before, exhibits non-linearity in the vector of decision variables $\left(\hat{\boldsymbol{x}}_{j}\right)_{j\in J}$. This non-linearity arises from the product $\pi_j$ of probabilities associated with all actions along the path from the root to $j$. A representation of this decision space that upholds linearity is known as \textit{sequence-form}. 
A \textit{sequence} is a series of $(j,a)$ starting from the root in a sequential decision process. 
In sequence-form representation, a strategy is the combination of the probabilities of playing each sequence. This is equivalent to adjusting the behavior-form strategy space at a generic decision point $j \in J$ based on the decision variable associated with the last action in the path from the root of the process to $j$. 
This paper characterize the sequence-form strategy space using a \textit{treeplex}~\cite{hoda2010smoothing}, following the construction method detailed in the work~\cite{DBLP:conf/nips/FarinaKS19}.
Formally, let $\mathcal{X}$ denote the sequence-form strategy space, and $\boldsymbol{x}$ represent a strategy in $\mathcal{X}$. The space $\mathcal{X}$ is recursively defined: at each decision point $j \in J$, $\mathcal{X}_{j,a} = \prod_{j'\in C_{j,a}} \mathcal{X}_{j'}$ (cartesian product). $\mathcal{X}_{j}$ is defined as:
\begin{equation}
\mathcal{X}_{j}=\left\{  \left(\hat{\boldsymbol{x}}_{j}, \hat{\boldsymbol{x}}_{j}\left[a_{1}\right] \boldsymbol{x}_{j, a_{1}} \ldots, \hat{\boldsymbol{x}}_{j}\left[a_{n_{j}}\right] \boldsymbol{x}_{j, a_{n_{j}}}\right):
\hat{\boldsymbol{x}}_{j} \in \Delta^{n_{j}}, \boldsymbol{x}_{j, a_{1}} \in \mathcal{X}_{j, a_{1}}, \ldots, \boldsymbol{x}_{j, a_{n_{j}}} \in \mathcal{X}_{j, a_{n_{j}}}\right\},
\label{recursive sequence-form}
\end{equation}
where $\hat{\boldsymbol{x}}_j \in \Delta^{n_j}$ and $(a_{1},...,a_{n_{j}}) = A_{j}$. Let $\mathcal{X} = \mathcal{X}_{0}$. In this formulation, the value of a particular action represents the probability of playing the
whole sequence of actions from the root to that action. Crucially, both $\mathcal{X}$ and all $\mathcal{X}_{j}$ are treeplexes, ensuring convexity and compactness~\cite{hoda2010smoothing}. Intuitively, $\mathcal{X}_{j}$ represents the sequence-form strategy space of the sub-sequential decision process that initiates from decision point $j$.

Given a concatenated vector $\boldsymbol{z} = (\boldsymbol{z}_{j_{1}}, \ldots, \boldsymbol{z}_{j_{m}}) \in \mathbb{R}^{\sum_{k=1}^{m} n_{j_k}}$, e.g., $\boldsymbol{x} \in \mathcal{X}$, we may need to extract a sub-vector related to decision point $j$ only. Formally, let $\boldsymbol{z}[j]$ represent the $n_j$ entries related to decision point $j$, and $\boldsymbol{z}[j, a]$ represent the entry corresponding to $(j, a)$. Additionally, let $\boldsymbol{z}[\downarrow j]$ denote the sub-vector corresponding to the decision points in $C_{\downarrow j}$. For any vector $\boldsymbol{z} \in \mathbb{R}^{n_j}$, e.g., a decision $\hat{x}_{j}$ at $j$, $\boldsymbol{z}[a]$ represents the entry corresponding to $a$. Let $p_j$ denote the pair $(j', a')$ such that $j \in C_{j',a'}$. Then, $\boldsymbol{x}[p_j] = \boldsymbol{x}[j', a']$ is the probability of reaching decision point $j$. Note that $p_o$ is undefined, and for convenience, we let $\boldsymbol{x}[p_o] = 1$. Based on the above definitions, there exist straightforward mappings among $\boldsymbol{x} \in \mathcal{X}$, $\boldsymbol{x}_j \in \mathcal{X}_j$, and a decision $\hat{\boldsymbol{x}}_j \in \Delta^{n_j}$. Formally, we redefine $\boldsymbol{x}_j = \boldsymbol{x}[\downarrow j] / \boldsymbol{x}[p_j]$ and $\hat{\boldsymbol{x}}_j = \boldsymbol{x}[j] / \boldsymbol{x}[p_j]$ for any $j \in J$. In the remainder of the paper,standard symbols like $\boldsymbol{x}$ and $\boldsymbol{x}_j$ represent variables related to the sequence-form space, while symbols with hats, such as $\hat{\boldsymbol{x}}_j$, represent variables related to local decision points.

By leveraging the sequence-form representation, finding a Nash equilibrium in a two-player zero-sum EFG with perfect recall can be framed as a bilinear SPP akin to the form illustrated in the SPP (\ref{bilinear form}), where the $\mathcal{X}$ and $\mathcal{Y}$ represents sequence-form strategy spaces of the sequential decision processes faced by the two players, and $G$ is a sparse matrix encoding the leaf payoffs of the game. To ensure conformity with the existing literature, let $\boldsymbol{l}^{t} = \boldsymbol{G}\boldsymbol{y}_{t}$, which is equal to $\boldsymbol f_{t}$ in the main text.
\subsubsection{Regret Decomposition Framework}
To efficiently solve the bilinear SPP (\ref{bilinear form}) with the integration of the sequence-form strategy, the adoption of regret decomposition stands out as a crucial element. It serves as a cornerstone for algorithms like CFR~\cite{zinkevich2007regret}, contributing to the resolution of large imperfect information games. It has been proven that the total regret for each player after $T$ iterations in the complete game is constrained by the sum of local (counterfactual) regrets and is bounded by $\mathcal{O}(\sqrt{T})$. Consequently, the average strategies will converge to a Nash equilibrium at a rate of $\mathcal{O}(1 / \sqrt{T})$. 
The central concept of CFR, namely counterfactual regret, can be reformulated based on the sequence-form representation~\cite{DBLP:conf/icml/FarinaKBS19}. Formally, 
given a strategy $\boldsymbol{x}_t \in \mathcal{X}$ and a loss $\boldsymbol l_t \in \mathbb{R}^{M}$, CFR constructs a counterfactual loss $\hat{\boldsymbol l}{}_j^t \in \mathbb{R}^{n_j}$ recursively for each $j \in J$:
\begin{equation}
    \hat{\boldsymbol{l}}{}_{j}^{t}[a]=\boldsymbol{l}^{t}[j, a]+\sum_{j^{\prime} \in C_{j, a}}\left\langle\hat{\boldsymbol{l}}{}_{j^{\prime}}^{t}, \hat{\boldsymbol{x}}_{j^{\prime}}^{t}\right\rangle .
    \label{counterfactual loss}
\end{equation}
By definition, $\langle\hat{\boldsymbol{l}}{}_{o}^{t}, \hat{\boldsymbol{x}}_{o}^{t}\rangle = \langle\hat{\boldsymbol{l}}{}^{t}, \hat{\boldsymbol{x}}^{t}\rangle$. Define the cumulative counterfactual loss as $\hat{\boldsymbol{L}}{}_{j}^{t}=\sum^{t}_{k=1}\hat{\boldsymbol{l}}{}_{j}^{k}$. Following Equation (\ref{counterfactual loss}), the cumulative counterfactual loss at local decision point can be written as:
\begin{equation}
    \hat{\boldsymbol{L}}{}_{j}^{t}[a]=\boldsymbol{L}^{t}[j, a]+\sum_{j^{\prime} \in C_{j, a}}\left(\sum_{k=1}^{t}\left\langle\hat{\boldsymbol{l}}{}_{j^{\prime}}^{k}, \hat{\boldsymbol{x}}_{j^{\prime}}^{k}\right\rangle \right).
    \label{cumulative counterfactual loss}
\end{equation}
Like the definitions in Section (\textbf{Preliminaries}), the instantaneous counterfactual regret vector can be defined as $\hat{\boldsymbol r}{}_{j}(\hat{\boldsymbol x}{}_{j}^{t}) = \langle\hat{\boldsymbol l}{}_{j}^{t}, \hat{\boldsymbol x}{}_{j}^{t}\rangle \cdot \boldsymbol1_{L}-\hat{\boldsymbol l}{}_{j}^{t}$, and the accumulative counterfactual regret $\hat{\boldsymbol R}{}^{t}_{j} = \sum_{\tau=0}^{t-1} \hat{\boldsymbol r}{}_{j}(\hat{\boldsymbol x}{}_{j}^{\tau})$. Denote the upper bound $\hat{R}{}^{t}_{j} =\max_{a \in A_{j}} \hat{\boldsymbol R}{}^{t}_{j}[a]$. The main theorem of CFR can be formulated as $R^{T} \leq \sum_{j \in J}[\hat{R}{}^{t}_{j}]^{+}$, as stated in~\cite{zinkevich2007regret}. Hence, by selecting a sequence of strategies that yield sublinear counterfactual regret in each information set, one can attain sublinear total regret. CFR achieves this by applying any
\textit{regret minimizer} learning algorithm, such as RM or RM+, operating on counterfactual regret locally at the decision point. The CFR/\text{CFR+} algorithms proceeds in the loop of following three operations: 
\begin{equation}
\begin{array}{l}
\text{Step I:} \qquad \qquad \ \ \ 
\hat{\boldsymbol{l}}{}_{j}^{t}[a] \leftarrow \boldsymbol{l}^{t}[j, a]+\sum_{j^{\prime} \in C_{j, a}}\langle\hat{\boldsymbol{l}}{}_{j^{\prime}}^{t}, \hat{\boldsymbol{x}}{}_{j^{\prime}}^{t}\rangle,\\
\text{Step II:} \qquad \qquad 
\hat{\boldsymbol{R}}{}_{j}^{t+1} \leftarrow \hat{\boldsymbol{R}}{}_{j}^{t}+ \hat{\boldsymbol r}{}_{j}(\hat{\boldsymbol x}{}_{j}^{t}),(\text{RM}) \quad \quad \quad
\hat{\boldsymbol{R}}{}_{j}^{t+1} \leftarrow [\hat{\boldsymbol{R}}{}_{j}^{t}+ \hat{\boldsymbol r}{}_{j}(\hat{\boldsymbol x}{}_{j}^{t})]^{+},(\text{RM+}) \\
\text{Step III:} \qquad \qquad 
\hat{\boldsymbol{x}}_{j}^{t+1} \leftarrow[\hat{\boldsymbol{R}}{}_{j}^{t+1}]^{+} /\|[\hat{\boldsymbol{R}}{}_{j}^{t+1}]^{+}\|_{1}.
\end{array}
\end{equation}
Note that the counterfactual loss $(\hat{\boldsymbol{l}}{}_{j}^{t})_{j \in J}$ can be calculated bottom-up. A direct implication arises when we instantiate step (II) with \text{MoRM+} (Algorithm \ref{Mo RM+}), the \text{MoCFR+} procedure can then be derived as outlined below:
\begin{equation}
\begin{array}{l}
\text{Step I:} \qquad \qquad \ \ \ 
\hat{\boldsymbol{l}}{}_{j}^{t}[a] \leftarrow \boldsymbol{l}^{t}[j, a]+\sum_{j^{\prime} \in C_{j, a}}\langle\hat{\boldsymbol{l}}{}_{j^{\prime}}^{t}, \hat{\boldsymbol{x}}{}_{j^{\prime}}^{t}\rangle,\\
\text{Step II:} \qquad \qquad 
\hat{\boldsymbol{R}}{}_{j}^{t+1} \leftarrow [\hat{\boldsymbol{R}}{}_{j}^{t}+ \hat{\boldsymbol r}{}_{j}(\hat{\boldsymbol x}{}_{j}^{t}) - {\beta(\hat{\boldsymbol{R}}{}_{j}^{att} - \hat{\boldsymbol{R}}{}_{j}^{t})}]^{+},\quad (\text{MoRM+}) \quad \quad \quad \quad \quad\\
\text{Step III:} \qquad \qquad 
\hat{\boldsymbol{x}}_{j}^{t+1} \leftarrow[\hat{\boldsymbol{R}}{}_{j}^{t+1}]^{+} /\|[\hat{\boldsymbol{R}}{}_{j}^{t+1}]^{+}\|_{1},\\
\text{Step IV:} \qquad \qquad 
\hat{\boldsymbol{R}}{}_{j}^{att} \leftarrow \hat{\boldsymbol{R}}{}_{j}^{t-1} \quad \quad\text{if \quad $t\ \%\ k = 0$}.
\end{array}
\end{equation}

\text{MoCFR+} can be readily implemented by transforming the original CFR/\text{CFR+} through the storage of an additional accumulative attachment regret vector $(\hat{\boldsymbol{R}}{}_{j}^{att})_{j \in J}$. 

\subsubsection{Dilated Distance Generate Function}
Besides regret decomposition framework, in order to apply first-order methods in the  bilinear structure of the sequence-form problem, a crucial aspect is the selection of a suitable mirror map for treeplex $\mathcal{X}$ and $\mathcal{Y}$. One such choice is within the class of dilated Distance Generating Functions (DGF):
\begin{equation}
\psi^{\text{dil}}(\boldsymbol{x}) = \sum_{j \in J} \alpha_{j}\boldsymbol{x}[p_{j}]\psi_{j}(\hat{\boldsymbol{x}}_{j}),
\label{dilated entropy}
\end{equation}
where $(\alpha_{j})_{j \in J} > 0$ are decision-point-wise weights and $\psi_{j}$ is a local DGF for the simplex $\hat{\boldsymbol{x}}_{j}$.The essence behind Equation (\ref{dilated entropy}) is that it represents the construction of a dilated DGF through the summation of suitable local DGFs for each decision point. In this process, each local DGF undergoes dilation by the parent variable associated with the respective decision point. Although there is existing literature~\cite{DBLP:conf/sigecom/FarinaKS21,kroer2020faster} discussing the proper configuration of $(\alpha_{j})_{j \in J}$ to achieve enhanced strong convexity properties, it remains orthogonal to the focus of our work. In the subsequent discussion and experimental analyses, we adopt a simplified approach by setting $\alpha_{j} = 1$ for all decision points, following the methodology employed in~\cite{DBLP:conf/nips/FarinaKS19}.

\textbf{Dilated Momentum-augmented Gradient Descent Ascent (DMoGDA)}. Define the dilated squared Euclidean norm regularizer
$\psi^{\text{dil}}_{l_{2}}$ with $\psi_{j}$ being the vanilla squared Euclidean norm $\psi_{j}(\hat{\boldsymbol{x}}_{j}) = \frac{1}{2}\|\hat{\boldsymbol{x}}_{j}\|^{2}$. We call the MoMD (Equation (\ref{MMDm})) with regularizer $\psi^{\text{dil}}_{l_{2}}$ Dilated Momentum-augmented Gradient Descent Ascent (DMoGDA).

\textbf{Dilated Momentum-augmented Multiplicative Weight Update (DMoMWU)}. Define the dilated entropy regularizer
$\psi^{\text{dil}}_{\text{ent}}$ with $\psi_{j}$ being the vanilla entropy $\psi({\hat{\boldsymbol{x}}_{j}}) = \hat{\boldsymbol{x}}_{j}\log  \hat{\boldsymbol{x}}_{j}$. We call the MoMD with regularizer $\psi^{\text{dil}}_{\text{ent}}$ Dilated Momentum-augmented Multiplicative Weight Update (DMoMWU). 

The DmoMD updating rules can be written as follows:
\begin{equation}
\begin{aligned}
    \boldsymbol \mu_{t}&=\beta \boldsymbol \mu_{t-1} - F(\boldsymbol{x}_{t})\\
    \boldsymbol{x}_{t+1} &=\underset{\boldsymbol{x} \in \mathcal{X}}{\operatorname{argmin}}\left\{\eta\left\langle\boldsymbol{x}, \boldsymbol -\boldsymbol \mu_{t}\right\rangle+D_{\psi^{\text{dil}}}\left(\boldsymbol{x}, \boldsymbol{x}_{t}\right)\right\}\\
    &=\underset{\boldsymbol{x} \in \mathcal{X}}{\operatorname{argmin}}\left\langle\boldsymbol{x}, \boldsymbol -\eta\boldsymbol \mu_{t} - \nabla\psi^{\text{dil}}(\boldsymbol{x}_{t})\right\rangle + \psi^{\text{dil}}(\boldsymbol{x})\\
    &=\underset{\boldsymbol{x} \in \mathcal{X}}{\operatorname{argmin}} \sum_{j \in J}\left\langle\boldsymbol{x}[j], \boldsymbol -\eta\boldsymbol \mu_{t}[j] - \nabla\psi^{\text{dil}}(\boldsymbol{x}_{t})[j]\right\rangle + \boldsymbol{x}[p_{j}]\psi_{j}(\hat{\boldsymbol{x}}_{j}) \\
    &=\underset{\boldsymbol{x} \in \mathcal{X}}{\operatorname{argmin}} \sum_{j \in J}\boldsymbol{x}[p_{j}] \left(\left\langle\hat{\boldsymbol{x}}_{j}, \boldsymbol -\eta\boldsymbol \mu_{t}[j] - \nabla\psi^{\text{dil}}(\boldsymbol{x}_{t})[j]\right\rangle + \psi_{j}(\hat{\boldsymbol{x}}_{j})\right).
    \label{DmoMD}
\end{aligned}
\end{equation}
The dilated updates of Equation (\ref{DmoMD}) can be computed in closed-form, starting from decision points $j$ without any children and progressing upwards in the sequential decision process. This process follows a reminiscent "bottom-up" paradigm. For a more comprehensive understanding of these types of updates, please refer to~\cite{DBLP:conf/nips/LeeKL21, DBLP:conf/icml/0004J0L22}.

\subsection{Experiment Settings}\label{sec: Experiment Settings}
\subsubsection{Game Description}
\textbf{Kuhn poker} is a standard benchmark in the EFG-solving community~\cite{kuhn1950simplified}. 
In the game of Kuhn Poker, each participant initially contributes an ante valued at 1 unit to the central pot. Subsequently, every player receives a single card from a deck containing three distinct cards, with these cards being privately dealt. A singular round of betting then ensues, characterized by the following sequential dynamics:
Initially, Player 1 must decide between two actions: to either check or to place a bet of 1 unit. 
\begin{itemize}
    \item If Player 1 checks, Player 2 can check or raise 1.
    \begin{itemize}
        \item If Player 2 checks, a showdown occurs; if Player 2 raises, Player 1 can fold or call.
        \begin{itemize}
            \item If Player 1 folds, Player 2 takes the pot; if Player 1 calls, a showdown occurs.
        \end{itemize}
    \end{itemize}
    \item  If Player 1 bets, Player 2 can fold or call.
    \begin{itemize}
        \item If Player 2 folds, Player 1 takes the pot; if Player 2 calls, a showdown occurs.
    \end{itemize}
\end{itemize}
During a showdown, the player possessing the higher-valued card emerges as the victor and consequently seizes the pot, thereby concluding the game instantaneously.

\textbf{Leduc poker} represents another established benchmark within the EFG-solving community~\cite{southey2012bayesbluffopponentmodelling}. 
This game operates with a deck comprising two suits of cards, each appearing twice. Structurally, Leduc Poker consists of two distinct rounds.
In the first round, each player contributes an ante of 1 unit to the central pot and receives a single private card. Subsequently, a round of betting unfolds, commencing with Player 1's action. Notably, each player is restricted to a maximum of two bets throughout this round.
Then, following the initial betting phase, a card is unveiled face up, inaugurating the second round of betting, with the same dynamics described above. It is imperative to note that all bets placed during the initial round hold a value of 1 unit, while those in the subsequent round are set at 2 units.
After the two betting round, if one of the players has a pair with the public card, that player wins the pot. Otherwise, the player with the higher card wins the pot.

\textbf{Goofspiel 5} stands as a prominent benchmark in EFG~\cite{ross1971goofspiel}. 
This game involves three distinct suits of cards, each comprising 5 ranks. 
At the beginning of the game, each player is dealt with one suit of private cards, and another suit of cards is served as the prize and kept face down on the desk. 
During each round of gameplay, the topmost card from the prize pool is revealed. 
Then, the players are asked to select a card from their respective hands, which they reveal simultaneously. 
The player who has a higher-ranked card wins the prize card, while in the event of a tie in ranks, the prize card is evenly split between the players.
After all prize cards are revealed, each player's score is determined by the cumulative sum of the ranks of the prize cards they have won. 
Subsequently, the payoff structure is as follows: the player with the higher score receives a payoff of +1, while their opponent receives a payoff of -1. In cases where both players have identical scores, the payoffs are distributed equally, resulting in a payoff of (0.5, 0.5).

\textbf{Goofspiel 4} is the same as Goofspiel 5, except that Goofspiel 4 has three suits of cards with each suit comprises 4 ranks.

\textbf{Goofspiel with limited-information} is a variant of the Goofspiel game in \citet{DBLP:conf/nips/LanctotWZB09}. 
In this game, players abstain from directly revealing their cards to one another. Instead, they present their respective cards to a pair umpire, whose role entails adjudicating which player has played the highest-ranking card, thereby deserving the prize card. 
In case of tie, the umpire directs the players to discard the prize card, similar to the procedure in the traditional Goofspiel game.

\textbf{Liar’s dice 4} is another standard benchmark in EFG-solving community~\cite{DBLP:conf/atal/LisyLB15}.
At the beginning of the game, each of the two players initially privately rolls an unbiased 4-face die.
Then, the first player begins bidding, articulating it in the form of Quantity-Value.
For example, a claim of "1-2" signifies the player's belief that there exists one die exhibiting a face value of 2.
Then, the second player can make a higher bid, or to call the previous bidder a “liar”. 
A bid is higher than the previous one if it has a higher Quantity with any Value or the same Quantity with a higher Value. 
When the player calls a liar, all dice are revealed.
If the bid is valid, the last bidder wins and receives a payoff of +1, while the challenger obtains a negative payoff of -1. 
Otherwise, the challenger wins the payoff of +1 and the last bidder loses the game with the reward -1. 

\textbf{Liar’s dice 5} is the same as Liar’s dice 4, except that Liar’s dice 5 rolls an unbiased 5-face die.

\subsubsection{Hyper-parameter Settings}
The hyper-parameters for algorithms we conducted in NFGs are listed in Table \ref{table: hyper-para of in NFGs}, and the
hyper-parameters in EFGs are listed in Table \ref{table: hyper-para of in EFGs}. Note that, in the context of MMD and Reg-Method, we adhere to their default parameter settings. We only conduct basic grid search slightly based on the default settings for fine-tuning procedures, resulting in performance comparable to or surpassing those reported in~\cite{DBLP:conf/iclr/SokotaDKLLMBK23}.

\begin{table*}[ht]
  
  \centering
    \begin{tabular}{l|l|l|l|l}
    \toprule

    & $\text{MoRM}^{+}$   
    & $\text{MoMWU}$ 
    & OGDA
    & OMWU
    \\
    \midrule
    3*3 matrix
    & $\beta=-0.04, k=10$ 
    & $\eta= 1.0,$\quad $\beta=-0.06, k=50$
    & $\eta= 1.0$
    & $\eta= 1.0$
    \\
    \hline
    Random Game (size=25)
    & $\beta=-0.02, k=70$
    & $\eta= 7.0,$\quad $\beta=-0.02, k=100$ 
    & $\eta= 4.0$
    & $\eta= 5.0$
    \\
    \hline
    Random Game (size=50)
    & $\beta=-0.005, k=70$
    & $\eta= 7.0,$\quad $\beta=-0.02, k=100$
    & $\eta= 5.0$
    & $\eta= 7.0$
    \\
    \hline
    Random Game (size=75)
    & $\beta=-0.003, k=35$
    & $\eta= 9.0,$\quad $\beta=-0.02, k=100$
    & $\eta= 7.0$
    & $\eta= 9.0$
    \\
    \bottomrule
  \end{tabular}
  \caption{Hyper-Parameter Settings in NFGs. }
  \label{table: hyper-para of in NFGs}
\end{table*}

\begin{table*}[ht]
  
  \centering
    \begin{tabular}{l|l|l|l}
    \toprule

    & $\text{MoCFR}^{+}$ 
    & $\text{DMoGDA}$ 
    & OGDA

    \\
    \midrule

     Kuhn Poker
    & $\beta=-0.2$\quad$k=5$
    &$\eta= 2$\quad $\beta=-0.1$\quad $k=10$
    & $\eta= 1.5$
    \\
    \hline
     Goofspiel-4
    & $\beta=-0.02$\quad$k=10$
    & $\eta= 1$\quad $\beta=-0.04$\quad $k=70$
    & $\eta= 0.5$
    \\
    \hline
     Liar's dice-4
    & $\beta=-0.02$\quad$k=40$
    & $\eta= 3$\quad $\beta=-0.005$\quad $k=10$
    & $\eta= 3$
    \\
    \hline
     Leduc Poker
    & $\beta=-0.01$\quad$k=30$
    & $\eta= 4$\quad $\beta=-0.005$\quad $k=100$
    & $\eta= 3$
    \\
    \hline
     Goofspiel-5
    & $\beta=-0.01$\quad$k=50$
    & $\eta= 0.8$\quad $\beta=-0.02$\quad $k=100$
    & $\eta= 0.8$
    \\
    \hline
     Liar's dice-5
    & $\beta=-0.003$\quad$k=100$
    & $\eta= 4$\quad $\beta=-0.0005$\quad $k=50$
    & $\eta= 3$
    \\
    \bottomrule
  \end{tabular}
  \caption{Hyper-Parameter Settings in EFGs. }
  \label{table: hyper-para of in EFGs}
\end{table*}


\subsection{Ablation experiment}\label{sec: Ablation experiment} 
This section will further explore the practical impact of our momentum buffer (related to parameter $k$) on algorithm performance. Specifically, we will apply the DMoGDA algorithm to the Kuhn and Leduc environments, analyze the effects of different momentum parameter $k$ on the algorithm's behavior and convergence speed, and demonstrate its performance in practical applications through numerical experiments.

\begin{figure}[ht]
  \centering
  \subfigure[]{
    \includegraphics[width=0.48\textwidth]{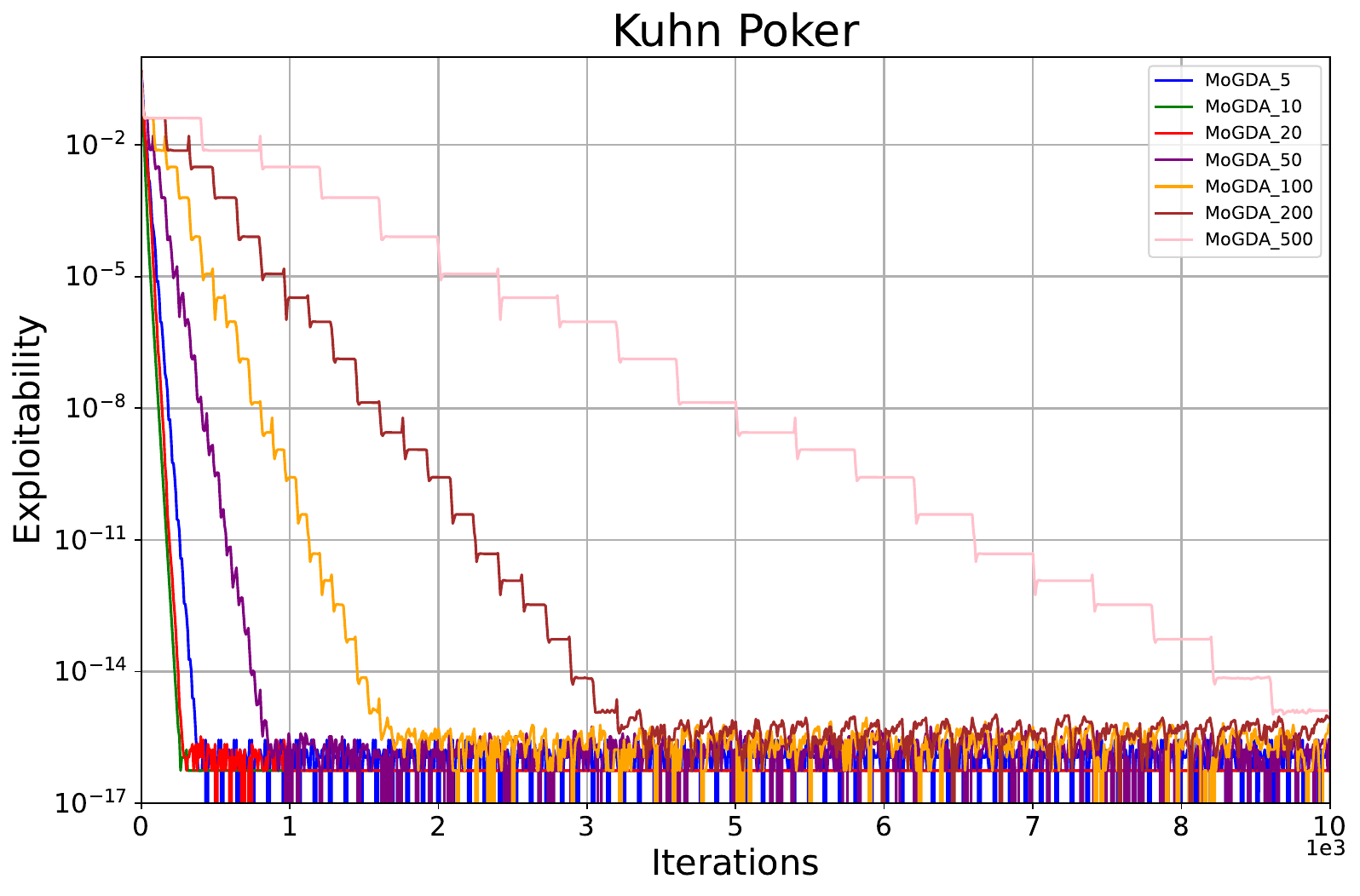}
    \label{curl}
    
  }
  \hfill
  \subfigure[]{
    \includegraphics[width=0.48\textwidth]{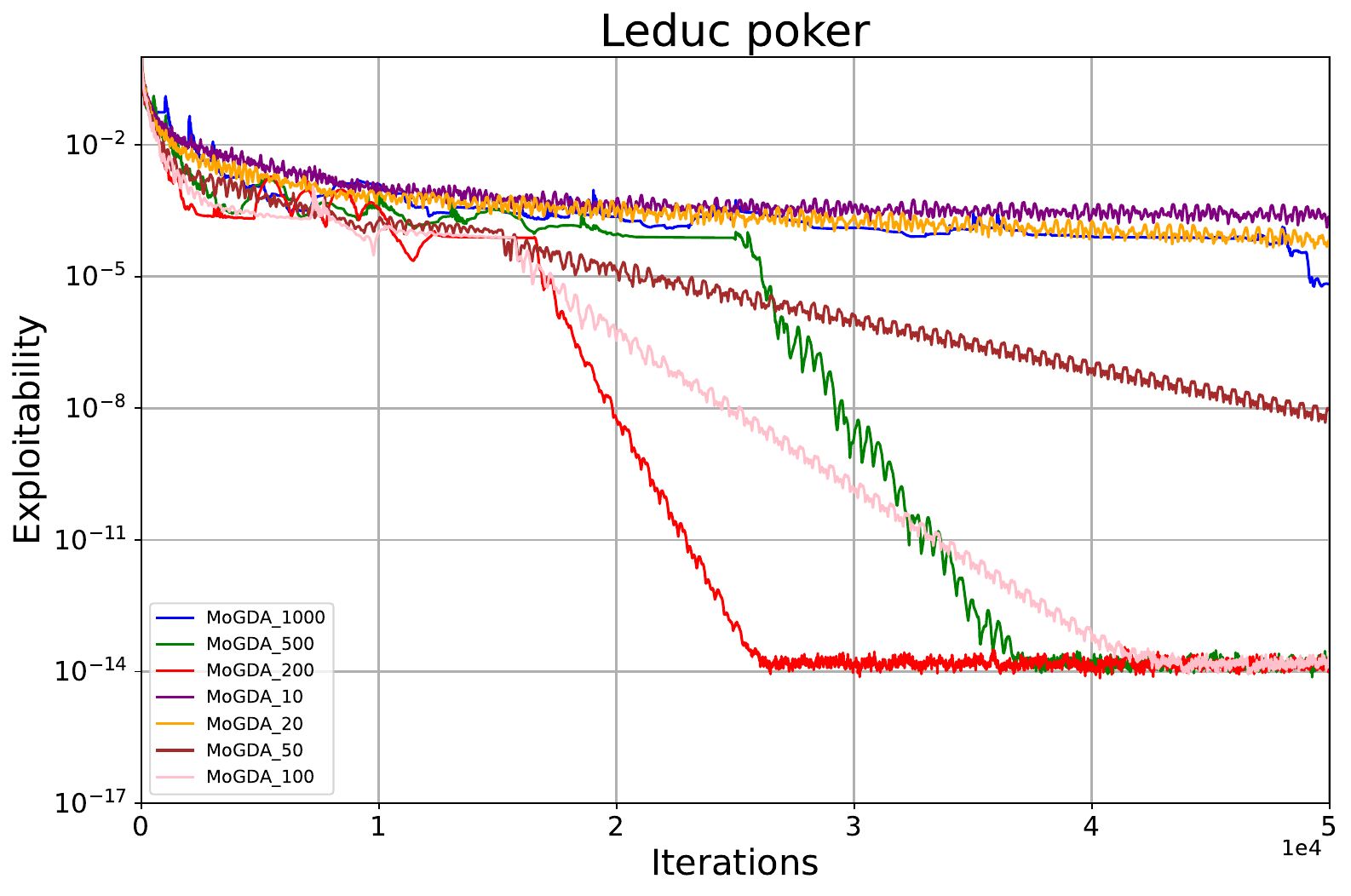}
    \label{Ablation results leduc}
  }
  \caption{The last-iterate convergence results of DMoGDA with different parameters k, in Kuhn Poker (left) and Leduc Poker (right).}
  \label{Ablation results}
\end{figure}


The experiment results above show that when the value of $k$ falls within an appropriate range, the algorithm's performance tends to stabilize. Beyond this point, further increasing $k$ does not significantly improve the algorithm's performance.
Additionally, as shown in Figure \ref{Ablation results}, selecting a large value of $k$ results in the algorithm exhibiting a distinct stair-step pattern. 
Intuitively, this phenomenon is akin to the process of stretching a spring to a certain length, at which point the force balance is achieved with the force introduced by the learning dynamics.
The time (or the optimal $k$) required to reach force balance is relative to the learing rate ($\eta$) and the coefficient($\beta $). This means that we do not need to use all historical momentum information; instead, by observing the stair-like curve that emerges when the "force balance" state occurs, we can determine the optimal value (or even an earlier value) for switching attachments, enabling the algorithm to converge more efficiently.Thus, in practice, we use a finite $k$ to focus the algorithm on recent momentum information.

\subsection{Theoretical Proofs}\label{appendix proof}

\subsubsection{Proof of Proposition~\ref{prop: unconstrained NM}}


\begin{proof}[Proof of Proposition \ref{prop: unconstrained NM}]

    To discretize $\dot{Z} = -F(Z)$, we have:
    \begin{align*}
        &\text{Euler Implicit Discretization: }
        \boldsymbol{z}_{t+1} - \boldsymbol{z}_{t} = -\delta F(\boldsymbol{z}_{t+1}),
        \\
        &\text{Euler Explicit Discretization: }
        \boldsymbol{z}_{t+1} - \boldsymbol{z}_{t} = -\delta F(\boldsymbol{z}_{t}).
    \end{align*}
    Therefore, Equation (\ref{eq: continuous negative}) can be discretized as follows:
    \begin{align}
        &\boldsymbol{z}_{t+1} - \boldsymbol{z}_{t} = \delta g(\boldsymbol{z}_{t+1}),\label{eq: prop 1}
        \\
        & g(\boldsymbol{z}_{t+1}) - g(\boldsymbol{z}_{t}) = -\delta F(\boldsymbol{z}_{t}) - \mu\delta g(\boldsymbol{z}_{t}).\label{eq: prop 2}
    \end{align}
    By combining (\ref{eq: prop 1}) and (\ref{eq: prop 2}), we get
    \begin{align*}
        \boldsymbol{z}_{t+1} - \boldsymbol{z}_{t} = 
        -\eta F(\boldsymbol{z}_{t}) + (1-\mu\delta)(\boldsymbol{z}_{t} - \boldsymbol{z}_{t-1}). 
    \end{align*}
\end{proof}

\subsubsection{Proof of Theorem~\ref{thrm: convergence of MoMWU}}


\begin{proof}[Proof of Theorem~\ref{thrm: convergence of MoMWU}]
    Under the setup in Theorem~\ref{thrm: convergence of MoMWU}, we get
    \begin{align}
        \boldsymbol{z}_{t+1} \propto \exp(-\eta \boldsymbol{L}_{t})
        \text{ and }
        \boldsymbol{z}_{att} \propto \exp(-\eta \boldsymbol{L}_{att}).
    \end{align}
    Thus, from the definition of $\boldsymbol{L}_{t}$, we have
    \begin{align}
        \boldsymbol z_{t+1} \propto
        \boldsymbol z_{t}^{1+\beta}\exp(-\eta F(\boldsymbol z_t) + \eta\beta\boldsymbol L_\text{ref})
        \propto
        \boldsymbol z_{t}^{1+\beta}\boldsymbol z_{att}^{-\beta}\exp(-\eta F(\boldsymbol z_t) ).
    \end{align}
    We invoke the following lemma first.
    \begin{lemma}
        Let $\mathcal{Z}$ be a convex set that satisfies $A\boldsymbol{z}=b$ for all $\boldsymbol{z}\in\mathcal{Z}$ for a matrix $A$ and a vector $b$. Then for $\boldsymbol g$ and $\boldsymbol g'$, and any $\boldsymbol{z}_*$, we have
        \begin{align*}
            D_\psi(\boldsymbol z_*, T(\boldsymbol{g}')) - D_\psi(\boldsymbol z_*,T(\boldsymbol g))
            +D_\psi(T(\boldsymbol g'),T(\boldsymbol g)) = \langle{\boldsymbol g - \boldsymbol g',  \boldsymbol{z}_*-T(\boldsymbol{g}')}\rangle.
        \end{align*}
    \end{lemma}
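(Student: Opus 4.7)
The plan is to decompose the claimed identity into two pieces: a purely algebraic three-point identity for the Bregman divergence, followed by an optimality characterisation of the map $T$ which converts the resulting gradient difference into the stated difference of loss vectors.

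First I would invoke the standard three-point Bregman identity
$D_\psi(a,b) + D_\psi(b,c) - D_\psi(a,c) = \langle \nabla\psi(c) - \nabla\psi(b),\, a-b\rangle,$
which follows by expanding $D_\psi(p,q) = \psi(p) - \psi(q) - \langle\nabla\psi(q), p-q\rangle$ in each of the three terms and cancelling the $\psi$-values. Specialising to $a = \boldsymbol{z}_*$, $b = T(\boldsymbol{g}')$ and $c = T(\boldsymbol{g})$, the left-hand side of the lemma becomes exactly $\langle \nabla\psi(T(\boldsymbol{g})) - \nabla\psi(T(\boldsymbol{g}')),\, \boldsymbol{z}_* - T(\boldsymbol{g}')\rangle$.

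Next I would replace this gradient difference by a loss difference using the first-order optimality condition for $T$. Since $T(\boldsymbol{g})$ minimises a linear-plus-$\psi$ objective over the affine set $\mathcal{Z} = \{\boldsymbol{z} : A\boldsymbol{z} = b\}$, stationarity gives $\nabla\psi(T(\boldsymbol{g})) \pm \boldsymbol{g} = A^\top \lambda$ for some multiplier $\lambda$, and analogously $\nabla\psi(T(\boldsymbol{g}')) \pm \boldsymbol{g}' = A^\top \lambda'$. Subtracting these two equalities yields $\nabla\psi(T(\boldsymbol{g})) - \nabla\psi(T(\boldsymbol{g}')) = \pm(\boldsymbol{g} - \boldsymbol{g}') + A^\top(\lambda - \lambda')$. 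Because $\boldsymbol{z}_*$ and $T(\boldsymbol{g}')$ both lie in $\mathcal{Z}$, we have $A(\boldsymbol{z}_* - T(\boldsymbol{g}')) = 0$, so the Lagrangian term $A^\top(\lambda - \lambda')$ is annihilated inside the inner product with $\boldsymbol{z}_* - T(\boldsymbol{g}')$, and the RHS of the lemma drops out.

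The one subtle point, and the only real obstacle, is that stationarity must hold as a two-sided equation rather than a variational inequality. This is exactly where the affine hypothesis $A\boldsymbol{z}=b$ enters: the normal cone to $\mathcal{Z}$ at any feasible point equals the whole range of $A^\top$, so the subdifferential inclusion collapses to an equality and a genuine Lagrange multiplier exists. For the intended MoMWU setting this is legitimate because the negative-entropy iterates $T(\boldsymbol{g})\propto \exp(-\boldsymbol{g})$ lie in the relative interior of the simplex, so the inequality constraints are never active and only the affine normalisation $\mathbf{1}^\top \boldsymbol{z}=1$ is binding. Once this is observed, the remainder is a mechanical combination of the two identities and needs no further calculation.
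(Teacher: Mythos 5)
Your proof is correct and follows essentially the same route as the paper's: both reduce the left-hand side to an inner product of a gradient difference against $\boldsymbol{z}_*-T(\boldsymbol{g}')$ (you via the three-point Bregman identity up front, the paper by expanding the three divergences and cancelling), and both then invoke the Lagrange-multiplier form of the first-order optimality condition on the affine set so that the $A^\top$ terms vanish against $A(\boldsymbol{z}_*-T(\boldsymbol{g}'))=0$. Your closing remark on why stationarity holds as an equality (interior iterates under negative entropy) is a point the paper leaves implicit, but the argument is the same.
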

    \begin{proof}
        By the first-order optimality condition of $T(\boldsymbol{g})$ and $T(\boldsymbol{g}')$, there exists $\boldsymbol{v},\boldsymbol{v}'$ such that
        \begin{align*}
            &\boldsymbol{g} + \nabla \psi(T(\boldsymbol{g})) = A^\top\boldsymbol{v},
            \\
            &\boldsymbol{g}' + \nabla \psi(T(\boldsymbol{g}')) = A^\top\boldsymbol{v}'.
        \end{align*}
        Thus, we have
        \begin{align*}
            \langle{\boldsymbol{g}, \boldsymbol{z}_*-T(\boldsymbol{g})}\rangle
            &=
            \langle{\nabla \psi(T(\boldsymbol{g}))+A^\top\boldsymbol{v}, \boldsymbol{z}_*-T(\boldsymbol{g})}\rangle
            \\&=
            \langle{\nabla \psi(T(\boldsymbol{g})), \boldsymbol{z}_*-T(\boldsymbol{g})}\rangle + \langle{A^\top\boldsymbol{v}, \boldsymbol{z}_*-T(\boldsymbol{g})}\rangle
            \\&=
            \langle{\nabla \psi(T(\boldsymbol{g})), \boldsymbol{z}_*-T(\boldsymbol{g})}\rangle + \langle{\boldsymbol{v}, b-b}\rangle
            \\&=
            \langle{\nabla \psi(T(\boldsymbol{g})), \boldsymbol{z}_*-T(\boldsymbol{g})}\rangle.
        \end{align*}
        Similarly, we have $\langle{\boldsymbol{g}', \boldsymbol{z}_*-T(\boldsymbol{g}')}\rangle = \langle{\nabla \psi(T(\boldsymbol{g}')), \boldsymbol{z}_*-T(\boldsymbol{g}')}\rangle$.
        Thus, since $D_\psi(\boldsymbol{z}', \boldsymbol{z}) = \psi(\boldsymbol{z}') - \psi(\boldsymbol{z}) - \langle{\nabla \psi(\boldsymbol{z}), \boldsymbol{z}'-\boldsymbol{z}}\rangle$, we have $D_\psi(\boldsymbol{z}_*, T(\boldsymbol{g})) = \psi(\boldsymbol{z}_*) - \psi(T(\boldsymbol{g})) - \langle{\boldsymbol{g}, \boldsymbol{z}_*-T(\boldsymbol{g})}\rangle$, $D_\psi(\boldsymbol{z}_*, T(\boldsymbol{g}')) = \psi(\boldsymbol{z}_*) - \psi(T(\boldsymbol{g}')) - \langle{\boldsymbol{g}', \boldsymbol{z}_*-T(\boldsymbol{g}')}\rangle$ and $D_\psi(T(\boldsymbol{g}'), T(\boldsymbol{g})) = \psi(T(\boldsymbol{g}') - \psi(T(\boldsymbol{g})) - \langle{\boldsymbol{g}, T(\boldsymbol{g}')-T(\boldsymbol{g})}\rangle$.

        Therefore, we get 
        \begin{align*}
            &D_\psi(\boldsymbol z_*, T(\boldsymbol{g}')) - D_\psi(\boldsymbol z_*,T(\boldsymbol g))
            +D_\psi(T(\boldsymbol g'),T(\boldsymbol g))
            \\=& 
            \psi(\boldsymbol{z}_*) - \psi(T(\boldsymbol{g}')) - \langle{\boldsymbol{g}', \boldsymbol{z}_*-T(\boldsymbol{g}')}\rangle 
            -\psi(\boldsymbol{z}_*) + \psi(T(\boldsymbol{g})) + \langle{\boldsymbol{g}, \boldsymbol{z}_*-T(\boldsymbol{g})}\rangle
            +\psi(T(\boldsymbol{g}') - \psi(T(\boldsymbol{g})) - \langle{\boldsymbol{g}, T(\boldsymbol{g}')-T(\boldsymbol{g})}\rangle
            \\=& 
            - \langle{\boldsymbol{g}', \boldsymbol{z}_*-T(\boldsymbol{g}')}\rangle 
            + \langle{\boldsymbol{g}, \boldsymbol{z}_*-T(\boldsymbol{g})}\rangle
            - \langle{\boldsymbol{g}, T(\boldsymbol{g}')-T(\boldsymbol{g})}\rangle
            \\=&
            \langle{\boldsymbol{g}-\boldsymbol{g}',  \boldsymbol{z}_*-T(\boldsymbol{g}')}\rangle.
        \end{align*}
    \end{proof}

    Thus, from the above lemma, we have
    \begin{align}\label{eq: thrm4.2 eq1}
        &D_\psi(\boldsymbol z_*,\boldsymbol z_{t+1}) - D_\psi(\boldsymbol z_*,\boldsymbol z_{t}) + D_\psi(\boldsymbol z_{t+1},\boldsymbol z_{t}) = -\eta\langle{\boldsymbol L_{t} - \boldsymbol L_{t-1}, \boldsymbol z_{t+1} - \boldsymbol z_{*}}\rangle
        \nonumber\\
        =&
        -\eta\langle{F(\boldsymbol z_t) + \beta\big( \boldsymbol L_{t-1} - \boldsymbol L_{att} \big), \boldsymbol z_{t+1} - \boldsymbol z_{*}}\rangle
        \nonumber\\
        =&
        \langle{-\eta F(\boldsymbol z_t) + \beta\log\frac{\boldsymbol z_{t}}{\boldsymbol z_{att}}, \boldsymbol z_{t+1} - \boldsymbol z_{*}}\rangle
        \nonumber\\
        =&
        -\eta\langle{ F(\boldsymbol z_t), \boldsymbol z_{t+1} - \boldsymbol z_{*}}\rangle
          - \beta\langle{\log\frac{\boldsymbol z_{t}}{\boldsymbol z_{att}}, \boldsymbol{z}_{t}-\boldsymbol z_{t+1} }\rangle
        +\beta\langle{\log\frac{\boldsymbol z_{t}}{\boldsymbol z_{att}}, \boldsymbol{z}_{t}-\boldsymbol z_{*} }\rangle.
    \end{align}
    The second term of (\ref{eq: thrm4.2 eq1}) can be rewritten as
    \begin{equation}
        \begin{aligned}
        &\langle{\log\frac{\boldsymbol z_{t}}{\boldsymbol z_{att}}, \boldsymbol{z}_{t}-\boldsymbol z_{t+1} }\rangle =
        D_\text{KL}(\boldsymbol z_{t}, \boldsymbol z_{att}) - D_\text{KL}(\boldsymbol z_{t+1}, \boldsymbol z_{att}) + D_\text{KL}(\boldsymbol z_{t+1}, \boldsymbol z_{t})
        \\&=
        D_\text{KL}(\boldsymbol z_{t}, \boldsymbol z_{att}) - D_\text{KL}(\boldsymbol z_{*}, \boldsymbol z_{att}) - D_\text{KL}(\boldsymbol z_{t+1}, \boldsymbol z_{*}) + \langle{\log\frac{\boldsymbol z_{*}}{\boldsymbol z_{att}}, \boldsymbol z_{*}-\boldsymbol z_{t+1}}\rangle + D_\text{KL}(\boldsymbol z_{t+1}, \boldsymbol z_{t}),
        \end{aligned}
    \end{equation}
    and the third term of (\ref{eq: thrm4.2 eq1}) can be rewritten as
    \begin{equation}
        \begin{aligned}
        &\langle{\log\frac{\boldsymbol z_{t}}{\boldsymbol z_{att}}, \boldsymbol{z}_{t}-\boldsymbol z_{*} }\rangle =
        D_\text{KL}(\boldsymbol z_{t}, \boldsymbol z_{att}) - D_\text{KL}(\boldsymbol z_{*}, \boldsymbol z_{att}) + D_\text{KL}(\boldsymbol z_{*}, \boldsymbol z_{t}).
        \end{aligned}
    \end{equation}
    Therefore, we have
    \begin{align}
        &D_\psi(\boldsymbol z_*,\boldsymbol z_{t+1}) - D_\psi(\boldsymbol z_*,\boldsymbol z_{t}) + D_\psi(\boldsymbol z_{t+1},\boldsymbol z_{t}) 
        \nonumber\\
        =&
        -\eta\langle{ F(\boldsymbol z_t), \boldsymbol z_{t+1} - \boldsymbol z_{*}}\rangle
          - \beta\langle{\log\frac{\boldsymbol z_{t}}{\boldsymbol z_{att}}, \boldsymbol{z}_{t}-\boldsymbol z_{t+1} }\rangle
        +\beta\langle{\log\frac{\boldsymbol z_{t}}{\boldsymbol z_{att}}, \boldsymbol{z}_{t}-\boldsymbol z_{*} }\rangle
        \nonumber\\
        \le&
        -\eta\langle{ F(\boldsymbol z_t), \boldsymbol z_{t+1} - \boldsymbol z_{*}}\rangle
        \nonumber\\&-
        \beta D_\text{KL}(\boldsymbol z_{t}, \boldsymbol z_{att}) + \beta D_\text{KL}(\boldsymbol z_{*}, \boldsymbol z_{att}) - \beta\langle{\log\frac{\boldsymbol z_{*}}{\boldsymbol z_{att}}, \boldsymbol z_{*}-\boldsymbol z_{t+1}}\rangle - \beta D_\text{KL}(\boldsymbol z_{t+1}, \boldsymbol z_{t})
        \nonumber\\&+
        \beta D_\text{KL}(\boldsymbol z_{t}, \boldsymbol z_{att}) -\beta D_\text{KL}(\boldsymbol z_{*}, \boldsymbol z_{att}) +\beta D_\text{KL}(\boldsymbol z_{*}, \boldsymbol z_{t})
        \nonumber\\
        =&
        -\eta\langle{ F(\boldsymbol z_t), \boldsymbol z_{t+1} - \boldsymbol z_{*}}\rangle
        -\beta\langle{\log\frac{\boldsymbol z_{*}}{\boldsymbol z_{att}}, \boldsymbol z_{*}-\boldsymbol z_{t+1}}\rangle - \beta D_\text{KL}(\boldsymbol z_{t+1}, \boldsymbol z_{t})
        +\beta D_\text{KL}(\boldsymbol z_{*}, \boldsymbol z_{t}),
    \end{align}
    which implies that
    \begin{align}
        &D_\psi(\boldsymbol z_*,\boldsymbol z_{t+1}) - (1+\beta)D_\psi(\boldsymbol z_*,\boldsymbol z_{t}) + (1+\beta)D_\psi(\boldsymbol z_{t+1},\boldsymbol z_{t}) 
        \nonumber\\
        =&
        -\eta\langle{ F(\boldsymbol z_t), \boldsymbol z_{t+1} - \boldsymbol z_{*}}\rangle
        -\beta\langle{\log\frac{\boldsymbol z_{*}}{\boldsymbol z_{att}}, \boldsymbol z_{*}-\boldsymbol z_{t+1}}\rangle 
        \nonumber\\
        =&
        -\eta\langle{ F(\boldsymbol z_{t+1}), \boldsymbol z_{t+1} - \boldsymbol z_{*}}\rangle
        -\beta\langle{\log\frac{\boldsymbol z_{*}}{\boldsymbol z_{att}}, \boldsymbol z_{*}-\boldsymbol z_{t+1}}\rangle 
        \nonumber\\&-\eta\langle{ F(\boldsymbol z_{t})-F(\boldsymbol z_{t+1}), \boldsymbol z_{t+1} - \boldsymbol z_{*}}\rangle
        \nonumber\\\le&
        -\eta\langle{ F(\boldsymbol z_{*}), \boldsymbol z_{t+1} - \boldsymbol z_{*}}\rangle
        -\beta\langle{\log\frac{\boldsymbol z_{*}}{\boldsymbol z_{att}}, \boldsymbol z_{*}-\boldsymbol z_{t+1}}\rangle 
        \nonumber\\&-\eta\langle{ F(\boldsymbol z_{t})-F(\boldsymbol z_{t+1}), \boldsymbol z_{t+1} - \boldsymbol z_{*}}\rangle.
    \end{align}
    Assume that $\boldsymbol z_*$ is the unique regularized equilibrium of $\min_{\boldsymbol x}\max_{\boldsymbol y} \boldsymbol x^\mathsf{T}\boldsymbol{G}\boldsymbol y + \frac{\alpha}{\eta}D_\text{KL}(\boldsymbol x,\boldsymbol x_{att}) - \frac{\alpha}{\eta}D_\text{KL}(\boldsymbol y,\boldsymbol y_{att})$. Then, we have
    \begin{align}
        &D_\psi(\boldsymbol z_*,\boldsymbol z_{t+1}) - (1+\beta)D_\psi(\boldsymbol z_*,\boldsymbol z_{t}) + (1+\beta)D_\psi(\boldsymbol z_{t+1},\boldsymbol z_{t}) 
        \nonumber\\
        \le&
        -\eta\langle{ F(\boldsymbol z_{*}), \boldsymbol z_{t+1} - \boldsymbol z_{*}}\rangle
        -\beta\langle{\log\frac{\boldsymbol z_{*}}{\boldsymbol z_{att}}, \boldsymbol z_{*}-\boldsymbol z_{t+1}}\rangle 
        \nonumber\\&-\eta\langle{ F(\boldsymbol z_{t})-F(\boldsymbol z_{t+1}), \boldsymbol z_{t+1} - \boldsymbol z_{*}}\rangle
        \nonumber\\
        \le&
        -\eta\langle{ F(\boldsymbol z_{t})-F(\boldsymbol z_{t+1}), \boldsymbol z_{t+1} - \boldsymbol z_{*}}\rangle
        \nonumber
        \\=&
        -\eta\langle{ F(\boldsymbol z_{t})-F(\boldsymbol z_{t+1}), \boldsymbol z_{t} - \boldsymbol z_{*}}\rangle
        -\eta\langle{ F(\boldsymbol z_{t})-F(\boldsymbol z_{t+1}), \boldsymbol z_{t+1} - \boldsymbol z_{t}}\rangle,
    \end{align}
    where the second inequality follows from the definition of $\boldsymbol{z}_*$. 

    Therefore, we have
    \begin{align}
        &D_\psi(\boldsymbol z_*,\boldsymbol z_{t+1}) - (1+\beta)D_\psi(\boldsymbol z_*,\boldsymbol z_{t}) + (1+\beta)D_\psi(\boldsymbol z_{t+1},\boldsymbol z_{t}) 
        \nonumber\\
        \le&
        -\eta\langle{ F(\boldsymbol z_{t})-F(\boldsymbol z_{t+1}), \boldsymbol z_{t} - \boldsymbol z_{*}}\rangle
        -\eta\langle{ F(\boldsymbol z_{t})-F(\boldsymbol z_{t+1}), \boldsymbol z_{t+1} - \boldsymbol z_{t}}\rangle
        \nonumber\\\le&
        \eta\rho\Vert{F(\boldsymbol z_{t})-F(\boldsymbol z_{t+1})}\Vert^2_1 + \frac{\eta}{2\rho}\Vert{\boldsymbol z_{t} - \boldsymbol z_{*}}\Vert^2_1+ \frac{\eta}{2\rho}\Vert{\boldsymbol z_{t+1} - \boldsymbol z_{t}}\Vert^2_1
        \nonumber\\\le&
        \frac{\eta}{2\rho}\Vert{\boldsymbol z_{t} - \boldsymbol z_{*}}\Vert^2_1+ (\frac{\eta}{2\rho}+\eta\rho)\Vert{\boldsymbol z_{t+1} - \boldsymbol z_{t}}\Vert^2_1
        \nonumber\\\le&
        \frac{\eta}{\rho}D_\text{KL}(\boldsymbol z_{*},\boldsymbol z_{t}) + (\frac{\eta}{\rho}+2\eta\rho)D_\text{KL}(\boldsymbol z_{t+1} , \boldsymbol z_{t}),
    \end{align}
    where the second inequality follows from the Yong's inequality and the last one follows from the Pinsker inequality.

    By setting $\rho=\frac{2\eta}{-\beta}$, we get
    \begin{align*}
        &D_\psi(\boldsymbol z_*,\boldsymbol z_{t+1}) 
        \\\le&
        (1+\frac{\beta}{2})D_\text{KL}(\boldsymbol z_{*},\boldsymbol z_{t}) + (-1-\frac{3}{2}\beta-\frac{4}{\beta}\eta^2)D_\text{KL}(\boldsymbol z_{t+1} , \boldsymbol z_{t})
    \end{align*}
    By setting $-\frac{2}{3}<\beta<0$ and $\eta\le\frac{\sqrt{-(1+\frac{3}{2}\beta)\beta}}{2}$, we can obtain our desired results:
    \begin{align*}
        &D_\psi(\boldsymbol z_*,\boldsymbol z_{t+1}) 
        \le
        (1+\frac{\beta}{2})D_\text{KL}(\boldsymbol z_{*},\boldsymbol z_{t}) 
        \\\le&
        (1+\frac{\beta}{2})^{t+1}D_\text{KL}(\boldsymbol z_{*},\boldsymbol z_{0}) .
    \end{align*}
    Thus, the proof is completed.
\end{proof}

\subsubsection{Proof of Theorem~\ref{thrm: DualityGap of MoMWU}}

\begin{proof}[Proof of Theorem~\ref{thrm: DualityGap of MoMWU}]
    From the definition of duality gap, we have
    \begin{align*}
        {DualityGap}(\boldsymbol z_t)=&
        \sup_{\boldsymbol{z}}\Big\{\langle{F(\boldsymbol{z}_{*}), \boldsymbol{z}_{*} - \boldsymbol{z}}\rangle
        -\langle{F(\boldsymbol{z}_{*}), \boldsymbol{z}_{*} - \boldsymbol{z}_{t}}\rangle
        \\&+\langle{F(\boldsymbol{z}_{t})-F(\boldsymbol{z}_{*}), \boldsymbol{z}_{t} - \boldsymbol{z}}\rangle
        \Big\}
        \\\le&
        DualityGap(\boldsymbol{z}_{*})
        -\langle{F(\boldsymbol{z}_{*}), \boldsymbol{z}_{*} - \boldsymbol{z}_{t}}\rangle
        \\&+
        \sup_{\boldsymbol{z}}\Big\{\langle{F(\boldsymbol{z}_{t})-F(\boldsymbol{z}_{*}), \boldsymbol{z}_{t} - \boldsymbol{z}}\rangle
        \Big\}
        \\\le&
        DualityGap(\boldsymbol{z}_{*})
        +\Vert{F(\boldsymbol{z}_{*})}\Vert\Vert{\boldsymbol{z}_{*} - \boldsymbol{z}_{t}}\Vert
        \\&+
        \Vert{F(\boldsymbol{z}_{t})-F(\boldsymbol{z}_{*})}\Vert \text{diam}(\boldsymbol{Z})
        \\\le&
        DualityGap(\boldsymbol{z}_{*})
        +(\Vert{F(\boldsymbol{z}_{*})}\Vert+\lambda\cdot\text{diam}(\boldsymbol{Z}))\Vert{\boldsymbol{z}_{*} - \boldsymbol{z}_{t}}\Vert_1
        \\\le&
        DualityGap(\boldsymbol{z}_{*})
        +(\Vert{F(\boldsymbol{z}_{*})}\Vert+\lambda\cdot\text{diam}(\boldsymbol{Z}))
        \sqrt{2D_\text{KL}(\boldsymbol{z}_{*},\boldsymbol{z}_{t})}
        \\\le&
        DualityGap(\boldsymbol{z}_{*})
        \\&+(\Vert{F(\boldsymbol{z}_{*})}\Vert+\lambda\cdot\text{diam}(\boldsymbol{Z}))
        \sqrt{2D_\text{KL}(\boldsymbol{z}_{*},\boldsymbol{z}_{0})}(1+\frac{\beta}{2})^{\frac{t}{2}},
    \end{align*}
    where the last inequality follows from Theorem~\ref{thrm: convergence of MoMWU}. 

    Furthermore, $DualityGap(\boldsymbol{z}_{*})$ can be upper bounded as:
    \begin{align*}
        DualityGap(\boldsymbol{z}_{*})\le&
        \sup_{\boldsymbol{z}}\Big\{
        \langle{F(\boldsymbol{z}_{*}) + \frac{\beta}{\eta}\log\frac{\boldsymbol{z}_{*}}{\boldsymbol{z}_{att}},\boldsymbol z_*-\boldsymbol z}\rangle
        -\frac{\beta}{\eta}\langle{\log\frac{\boldsymbol{z}_{*}}{\boldsymbol{z}_{att}},\boldsymbol z_*-\boldsymbol z}\rangle
        \Big\}
        \\\le&
        \sup_{\boldsymbol{z}}\Big\{
        -\frac{\beta}{\eta}\langle{\log\frac{\boldsymbol{z}_{*}}{\boldsymbol{z}_{att}},\boldsymbol z_*-\boldsymbol z}\rangle
        \Big\}
        \\\le&
        -\frac{\beta}{\eta}\text{diam}(\boldsymbol Z)
        \cdot\Vert{\log\frac{\boldsymbol{z}_{*}}{\boldsymbol{z}_{att}}}\Vert,
    \end{align*}
    where the second inequality follows from the definition of $\boldsymbol{z}_*$ and the last one follows from the Cauchy-Schwarz inequality. 

    Thus, the proof is completed. 
    
\end{proof}

\subsubsection{Proof of Theorem \ref{thrm: convergence to NE}}

\begin{proof}[Proof of Theorem \ref{thrm: convergence to NE}]

We re-denote the set of Nash equilibria of the original game as $\mathcal{Z}_{*} = \{\boldsymbol{z}_*\}$, and the regularized equilibrium under $\boldsymbol{z}_{att}$ as $\boldsymbol{z}_{att}^*$, and we use the superscript $(n)$ to denote the n-th updated $\boldsymbol{z}_{att}$ as $\boldsymbol{z}_{att}^{(n)}$ (similar to $\boldsymbol{z}_{att}^{(n),*}$). 

We begin with following useful lemmas. 
\begin{lemma}[Adapted from Lemma 6.2 in \citet{DBLP:journals/corr/abs-2208-09855}]\label{lemma: exact 1}
    For any $n\ge0$, if $\boldsymbol{z}_{att}^{(n)}\in \mathcal{Z}\backslash\mathcal{Z}_*$, we have $\min_{\boldsymbol{z}_{*}}D_\psi(\boldsymbol{z}_*, \boldsymbol{z}_{att}^{(n+1)}) < \min_{\boldsymbol{z}_{*}}D_\phi(\boldsymbol{z}_*, \boldsymbol{z}_{att}^{(n)})$. 
    Otherwise, $\boldsymbol{z}_{att}^{(n+1)}= \boldsymbol{z}_{att}^{(n)}\in\mathcal{Z}_*$. 
\end{lemma}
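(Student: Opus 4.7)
The plan is to leverage the fact that, with $k$ taken large enough inside each attachment block, $\boldsymbol{z}_{att}^{(n+1)}$ coincides with $\boldsymbol{z}_{att}^{(n),*}$, i.e., the regularized equilibrium of the modified game (\ref{eq: modified game}) using $\boldsymbol{z}_{att}^{(n)}$ as reference. The lemma then reduces to showing that moving from a non-equilibrium reference $\boldsymbol{z}_{att}^{(n)}$ to $\boldsymbol{z}_{att}^{(n),*}$ strictly decreases the Bregman distance to the Nash set, a statement that mirrors Lemma~6.2 of \citet{DBLP:journals/corr/abs-2208-09855} and follows from a standard three-point Bregman identity combined with the monotonicity of the bilinear operator $F$.

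Concretely, write $\lambda = -\beta/\eta > 0$ and $\boldsymbol{z}^{\star} = \boldsymbol{z}_{att}^{(n),*}$. Since $\boldsymbol{x}^{\star}$ solves $\min_{\boldsymbol{x}\in\mathcal{X}}\{\boldsymbol{x}^{\top}\boldsymbol{G}\boldsymbol{y}^{\star} + \lambda D_\psi(\boldsymbol{x}, \boldsymbol{x}_{att}^{(n)})\}$ and $\boldsymbol{y}^{\star}$ satisfies the symmetric problem, the proximal three-point identity applied to each player and summed with $\hat{\boldsymbol{z}} = \boldsymbol{z}_{*}$ for an arbitrary $\boldsymbol{z}_{*} \in \mathcal{Z}_{*}$ gives
\begin{equation*}
    D_\psi(\boldsymbol{z}_{*}, \boldsymbol{z}_{att}^{(n)}) \ge D_\psi(\boldsymbol{z}_{*}, \boldsymbol{z}^{\star}) + D_\psi(\boldsymbol{z}^{\star}, \boldsymbol{z}_{att}^{(n)}) + \lambda^{-1}\langle F(\boldsymbol{z}^{\star}), \boldsymbol{z}^{\star} - \boldsymbol{z}_{*}\rangle.
\end{equation*}
I would then control the cross term by invoking monotonicity of the bilinear $F$, which in fact satisfies $\langle F(\boldsymbol{z}) - F(\boldsymbol{z}'), \boldsymbol{z} - \boldsymbol{z}'\rangle = 0$, together with the Nash variational inequality $\langle F(\boldsymbol{z}_{*}), \boldsymbol{z} - \boldsymbol{z}_{*}\rangle \ge 0$, yielding $\langle F(\boldsymbol{z}^{\star}), \boldsymbol{z}^{\star} - \boldsymbol{z}_{*}\rangle \ge 0$. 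Hence $D_\psi(\boldsymbol{z}_{*}, \boldsymbol{z}^{\star}) + D_\psi(\boldsymbol{z}^{\star}, \boldsymbol{z}_{att}^{(n)}) \le D_\psi(\boldsymbol{z}_{*}, \boldsymbol{z}_{att}^{(n)})$.

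The remaining step is to upgrade this weak inequality to a strict one whenever $\boldsymbol{z}_{att}^{(n)} \notin \mathcal{Z}_{*}$. I would argue by contrapositive: if $\boldsymbol{z}^{\star} = \boldsymbol{z}_{att}^{(n)}$, then the regularization gradient term $\nabla\psi(\boldsymbol{z}^{\star}) - \nabla\psi(\boldsymbol{z}_{att}^{(n)})$ vanishes in the first-order conditions of the regularized game, so those conditions collapse precisely onto the Nash VI $\langle F(\boldsymbol{z}_{att}^{(n)}), \boldsymbol{z} - \boldsymbol{z}_{att}^{(n)}\rangle \ge 0$ on $\mathcal{Z}$, forcing $\boldsymbol{z}_{att}^{(n)} \in \mathcal{Z}_{*}$. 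Therefore $D_\psi(\boldsymbol{z}^{\star}, \boldsymbol{z}_{att}^{(n)}) > 0$, and taking the minimum over $\boldsymbol{z}_{*} \in \mathcal{Z}_{*}$ on both sides delivers the strict decrease. The ``otherwise'' clause is immediate from the converse of the same fixed-point characterization. The subtle obstacle lies precisely in this fixed-point step: one must verify that the normal-cone contributions at $\boldsymbol{z}_{att}^{(n)}$ from the separate decision sets $\mathcal{X}$ and $\mathcal{Y}$ do not spuriously admit non-Nash fixed points once the reference gradient term drops out --- a check that proceeds cleanly under strict convexity of $\psi$, e.g.\ the negative entropy used throughout Theorem~\ref{thrm: convergence to NE}.
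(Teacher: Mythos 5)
The paper does not actually prove this lemma: it imports it by citation, labelling it ``adapted from Lemma~6.2'' of \citet{DBLP:journals/corr/abs-2208-09855}, and the only in-paper content is how it is \emph{used} in the proof of Theorem~\ref{thrm: convergence to NE} (where the attachment update is likewise idealized as the exact map $f(\boldsymbol{z}_{att})=\boldsymbol{z}_{att}^{*}$). Your self-contained derivation is correct and is essentially the standard argument one would expect behind the cited lemma: the first-order optimality condition of each player's regularized problem, rewritten via the three-point Bregman identity and summed, gives $D_\psi(\boldsymbol{z}_{*},\boldsymbol{z}_{att}^{(n)}) \ge D_\psi(\boldsymbol{z}_{*},\boldsymbol{z}^{\star}) + D_\psi(\boldsymbol{z}^{\star},\boldsymbol{z}_{att}^{(n)}) + \lambda^{-1}\langle F(\boldsymbol{z}^{\star}),\boldsymbol{z}^{\star}-\boldsymbol{z}_{*}\rangle$; the cross term is nonnegative because $\langle F(\boldsymbol{z})-F(\boldsymbol{z}'),\boldsymbol{z}-\boldsymbol{z}'\rangle=0$ for the bilinear $F$ plus the Nash variational inequality; and strictness follows from $D_\psi(\boldsymbol{z}^{\star},\boldsymbol{z}_{att}^{(n)})>0$ together with the fixed-point characterization, which also yields the ``otherwise'' clause by uniqueness of the regularized equilibrium. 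Evaluating the summed inequality at the minimizer of $D_\psi(\cdot,\boldsymbol{z}_{att}^{(n)})$ over $\mathcal{Z}_{*}$ then delivers the strict decrease of the minimum. Two remarks: your opening reduction ($\boldsymbol{z}_{att}^{(n+1)}=\boldsymbol{z}_{att}^{(n),*}$ for sufficiently large $k$) is exactly the idealization the paper itself makes in Theorem~\ref{thrm: convergence to NE}, so you are not introducing a new gap there; and the ``normal-cone'' worry you flag at the end is a non-issue, since the feasible sets $\mathcal{X},\mathcal{Y}$ are unchanged between the regularized and unregularized variational inequalities, so when the reference-gradient term vanishes the two conditions coincide verbatim. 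The only technicality left implicit (in both your argument and the paper's framework) is interiority of the iterates so that $\nabla\psi$ is finite for the entropy regularizer, which holds for MWU-type updates.
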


\begin{lemma}[Adapted from Lemma 6.3 in \citet{DBLP:journals/corr/abs-2208-09855}]\label{lemma: exact 2}
    Let $f(\boldsymbol{z}_{att}) = \boldsymbol{z}_{att}^{*}$ be a map that maps the strategy $\boldsymbol{z}_{att}$ to its corresponding regularized equilibrium $\boldsymbol{z}_{att}^{*}$. Then, $f$ is continuous. 
\end{lemma}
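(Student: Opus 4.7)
The plan is to establish continuity by a standard ``subsequence + limit identification + uniqueness'' argument, exploiting the fact that the regularized game in Equation~(\ref{eq: modified game}) has a unique saddle point whenever $\beta<0$. Since $-\tfrac{\beta}{\eta}D_\psi(\boldsymbol{x},\boldsymbol{x}_{att})$ is strongly convex in $\boldsymbol{x}$ and $\tfrac{\beta}{\eta}D_\psi(\boldsymbol{y},\boldsymbol{y}_{att})$ is strongly concave in $\boldsymbol{y}$, the perturbed bilinear objective
\[
\Phi(\boldsymbol{x},\boldsymbol{y};\boldsymbol{z}_{att})
\;=\;
\boldsymbol{x}^{\top}\boldsymbol{G}\boldsymbol{y}-\tfrac{\beta}{\eta}D_{\psi}(\boldsymbol{x},\boldsymbol{x}_{att})+\tfrac{\beta}{\eta}D_{\psi}(\boldsymbol{y},\boldsymbol{y}_{att})
\]
admits exactly one saddle point $\boldsymbol{z}_{att}^{*}=f(\boldsymbol{z}_{att})$ for each fixed $\boldsymbol{z}_{att}$ in the relative interior of $\mathcal{Z}$, so $f$ is a well-defined single-valued map.

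First, fix an arbitrary convergent sequence $\boldsymbol{z}_{att}^{(n)}\to\boldsymbol{z}_{att}$ in $\mathcal{Z}$. By compactness of $\mathcal{Z}$, the image sequence $\{f(\boldsymbol{z}_{att}^{(n)})\}_{n\ge1}$ has at least one accumulation point; pick a subsequence (not relabelled) with $f(\boldsymbol{z}_{att}^{(n)})\to\tilde{\boldsymbol{z}}\in\mathcal{Z}$. Next I would pass to the limit in the saddle-point characterisation of $f(\boldsymbol{z}_{att}^{(n)})$: for every $\boldsymbol{x}\in\mathcal{X}$ and $\boldsymbol{y}\in\mathcal{Y}$,
\begin{equation*}
\Phi(f_{\boldsymbol{x}}(\boldsymbol{z}_{att}^{(n)}),\boldsymbol{y};\boldsymbol{z}_{att}^{(n)})
\;\le\;
\Phi(f(\boldsymbol{z}_{att}^{(n)});\boldsymbol{z}_{att}^{(n)})
\;\le\;
\Phi(\boldsymbol{x},f_{\boldsymbol{y}}(\boldsymbol{z}_{att}^{(n)});\boldsymbol{z}_{att}^{(n)}).
\end{equation*}
Joint continuity of $\Phi$ in its three arguments on the relative interior of $\mathcal{Z}\times\mathcal{Z}$ (the KL term is continuous where $\boldsymbol{z}_{att}$ stays away from zeros) yields, in the limit,
\begin{equation*}
\Phi(\tilde{\boldsymbol{x}},\boldsymbol{y};\boldsymbol{z}_{att})
\;\le\;
\Phi(\tilde{\boldsymbol{z}};\boldsymbol{z}_{att})
\;\le\;
\Phi(\boldsymbol{x},\tilde{\boldsymbol{y}};\boldsymbol{z}_{att})
\quad\forall\,(\boldsymbol{x},\boldsymbol{y})\in\mathcal{Z},
\end{equation*}
so $\tilde{\boldsymbol{z}}$ is a saddle point of the modified game at $\boldsymbol{z}_{att}$. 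By uniqueness, $\tilde{\boldsymbol{z}}=f(\boldsymbol{z}_{att})$. Since every convergent subsequence of $\{f(\boldsymbol{z}_{att}^{(n)})\}$ has the same limit $f(\boldsymbol{z}_{att})$, the whole sequence converges to $f(\boldsymbol{z}_{att})$, which is precisely sequential continuity of $f$.

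The main obstacle is the boundary behaviour of the negative-entropy Bregman divergence: $D_{\psi}(\boldsymbol{x},\boldsymbol{x}_{att})$ is only lower semicontinuous when $\boldsymbol{x}_{att}$ approaches the boundary of the simplex (indeed, a coordinate of $\boldsymbol{x}_{att}$ tending to zero while the corresponding coordinate of $\boldsymbol{x}$ stays positive sends the divergence to $+\infty$). I would handle this by noting that the mirror-descent iterates and the attachment updates of Algorithm~\ref{Restarting Aggregated Momentum} remain in the relative interior of $\mathcal{Z}$ (the exponential map $\boldsymbol{z}_{t+1}\propto\exp(-\eta\boldsymbol{L}_{t})$ has strictly positive entries), so the domain on which continuity of $f$ is actually invoked in the proof of Theorem~\ref{thrm: convergence to NE} lies in the interior, where joint continuity of $\Phi$ is routine. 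An alternative route, which would sidestep the boundary discussion entirely, is to appeal to Berge's maximum theorem: the saddle value map $\boldsymbol{z}_{att}\mapsto\min_{\boldsymbol{x}}\max_{\boldsymbol{y}}\Phi(\boldsymbol{x},\boldsymbol{y};\boldsymbol{z}_{att})$ is continuous on the interior, the argmin/argmax correspondence is upper hemicontinuous with compact values, and strong convex-concavity forces single-valuedness, yielding continuity of $f$ directly.
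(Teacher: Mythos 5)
Your proposal cannot coincide with ``the paper's approach'' for a simple reason: the paper never proves this lemma at all. It is stated as adapted from Lemma 6.3 of \citet{DBLP:journals/corr/abs-2208-09855} and invoked as a black box inside the proof of Theorem~\ref{thrm: convergence to NE}; no argument for it appears in the appendix. Your proof therefore supplies what the paper outsources by citation, and the route you take is the standard one and is sound: well-definedness of $f$ from strong convexity--concavity of the perturbed objective (correctly observed, since $D_\psi(\cdot,\boldsymbol{x}_{att})$ differs from $\psi$ only by affine terms, so the $1$-strong convexity of negative entropy w.r.t.\ $\Vert\cdot\Vert_1$ carries over), then compactness of $\mathcal{Z}$, extraction of a convergent subsequence of $f(\boldsymbol{z}_{att}^{(n)})$, passage to the limit in the two saddle-point inequalities via joint continuity of $\Phi$, and identification of the limit with $f(\boldsymbol{z}_{att})$ by uniqueness, whence the full sequence converges. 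The Berge maximum-theorem alternative you sketch reaches the same conclusion with less bookkeeping, at the price of the same interior restriction, so the two variants are essentially interchangeable.

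One caveat deserves more care than you give it. Your claim that continuity of $f$ is ``only invoked in the interior'' in the proof of Theorem~\ref{thrm: convergence to NE} is not airtight: although each attachment point $\boldsymbol{z}_{att}\propto\exp(-\eta\boldsymbol{L}_{att})$ is strictly positive, the sequence of attachment points can (and, whenever every Nash equilibrium has a zero coordinate, must) accumulate on the boundary of the simplex, and the set $\Omega_{b,B}$ over which that proof maximizes $\Delta V$ is compact in the \emph{closed} polytope and may contain boundary points: the constraint $\min_{\boldsymbol{z}_*}D_\psi(\boldsymbol{z}_*,\boldsymbol{z}_{att})\le B$ only bounds away from zero the coordinates lying in the support of some equilibrium (each off-support term contributes at least $-1/e$, so a vanishing in-support coordinate would force the divergence past $B$), while off-support coordinates of $\boldsymbol{z}_{att}$ may vanish freely without inflating the divergence. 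A fully rigorous version should therefore either extend continuity of $f$ to such boundary points --- e.g., via a support-inclusion argument showing the regularized equilibrium inherits the support of $\boldsymbol{z}_{att}$ and the limit inequalities survive coordinatewise --- or restate the lemma on the relevant sublevel set and verify that Theorem~\ref{thrm: convergence to NE} needs it only there. This is a gap in how the lemma meshes with its application rather than in your limit argument itself, and the same caveat attaches to the cited Lemma 6.3.
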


Denote $b = \lim_{n\rightarrow\infty} \min_{\boldsymbol{z}_{*}\in\mathcal{Z}_{*}} D_{\psi}(\boldsymbol{z}_{*}, \boldsymbol{z}_{att}^{(n)}) \ge0$. We next prove that $b=0$ and thus $\boldsymbol{z}_{att}^{(n)}$ converges to the set of Nash equilibria. 

By contradiction, we suppose that $b>0$ and define $B = \min_{\boldsymbol{z}_{*}\in\mathcal{Z}_{*}} D_{\psi}(\boldsymbol{z}_{*}, \boldsymbol{z}_{att}^{(0)})$. 
From Lemma~\ref{lemma: exact 1}, $\min_{\boldsymbol{z}_{*}}D_\psi(\boldsymbol{z}_*, \boldsymbol{z}_{att}^{(n)})$ monotonically decreases, and thus each $\boldsymbol{z}_{att}^{(n)})$ falls into the set $\Omega_{b,B} = \{ \boldsymbol{z}_{att}: b \le \min_{\boldsymbol{z}_{*}}D_\psi(\boldsymbol{z}_*, \boldsymbol{z}_{att}) \le B \}$. 
From Lemma~\ref{lemma: exact 2}, $\min_{\boldsymbol{z}_{*}}D_\psi(\boldsymbol{z}_*, \boldsymbol{z}_{att})$ is continuous, and thus $\Omega_{b,B}$ is a compact set due to the boundedness of $\mathcal{Z}$. 

From lemma~\ref{lemma: exact 2}, $\Delta V(\boldsymbol{z}_{att}) := \min_{\boldsymbol{z}_{*}}D_\psi(\boldsymbol{z}_*, f(\boldsymbol{z}_{att})) - \min_{\boldsymbol{z}_{*}}D_\psi(\boldsymbol{z}_*, \boldsymbol{z}_{att})$ is also continuous. 
Thus $\Delta V(\boldsymbol{z}_{att})$ has a maximum over a compact set, i.e., $M=\max_{\boldsymbol{z}_{att}\in\Omega_{b,B}} \Delta V(\boldsymbol{z}_{att})$ exists. 
From Lemma~\ref{lemma: exact 1}, $M<0$, and thus we have:
\begin{align*}
    \min_{\boldsymbol{z}_{*}}D_\psi(\boldsymbol{z}_*, \boldsymbol{z}_{att}^{(n)}) &= 
    \min_{\boldsymbol{z}_{*}}D_\psi(\boldsymbol{z}_*, \boldsymbol{z}_{att}^{(0)}) + \sum_{l=0}^{n-1}\Big( \min_{\pi_{*}\in\Pi_{*}} \min_{\boldsymbol{z}_{*}}D_\psi(\boldsymbol{z}_*, \boldsymbol{z}_{att}^{(l+1)}) - \min_{\boldsymbol{z}_{*}}D_\psi(\boldsymbol{z}_*, \boldsymbol{z}_{att}^{(l)}) \Big)
    \\&\le
    B + nM.
\end{align*}
This implies that $\min_{\boldsymbol{z}_{*}}D_\psi(\boldsymbol{z}_*, \boldsymbol{z}_{att}^{(n)}) < 0$ for $n>\frac{-B}{M}$, which is a contradiction since $\min_{\boldsymbol{z}_{*}}D_\psi(\boldsymbol{z}_*, \boldsymbol{z}_{att}^{(n)})\ge0$. 

Thus, the proof is completed. 
\end{proof}

\end{document}